\documentclass[letterpaper]{article} 
\usepackage[preprint]{aaai2027}  
\usepackage[hyphens]{url}  
\usepackage{graphicx}
\usepackage{natbib} 
\usepackage{caption} 
\usepackage{amsmath}
\usepackage{amssymb}
\usepackage{amsthm}
\usepackage{multirow}
\usepackage{wasysym} 
\usepackage{wasysym}

\usepackage[table]{xcolor}
\definecolor{familybg}{rgb}{0.91, 0.93, 0.96}

\usepackage{algorithm}
\usepackage{algorithmic}
\usepackage{booktabs}
\usepackage{amsthm}

\newtheorem{theorem}{Theorem}

\newtheorem{proposition}{Proposition}

\newtheorem{remark}{Remark}
\newcommand{\invkd}{\textsc{inv\_kd}}

\newtheorem{definition}{Definition}

\usepackage{newfloat}
\usepackage{listings}
\DeclareCaptionStyle{ruled}{labelfont=normalfont,labelsep=colon,strut=off} 
\floatstyle{ruled}
\newfloat{listing}{tb}{lst}{}
\floatname{listing}{Listing}

\usepackage{booktabs}
\usepackage{microtype}

\title{Do Student LLMs Inherit OOD Robustness? \\ Invariance-Weighted Distillation for Reliable Knowledge Transfer}

\author{
    Dileesha Kannangara,
    Sanghamitra Dutta
}

\affiliations{
    University of Maryland, College Park\\
    dileesha@umd.edu, sanghamd@umd.edu
}

\begin{document}

\maketitle

\begin{abstract}

Knowledge distillation (KD) aims to compress high-performance teacher LLMs into lightweight students. However, distilled students often exhibit substantial performance degradation in out-of-distribution (OOD) settings, a critical gap that remains underexplored. We identify two compounding mechanisms causing OOD performance degradation: (1) data spuriousness: students can learn spurious correlations in the distillation dataset over genuine causal relationships; and (2) teacher capability: standard KD treats all samples uniformly, ignoring whether the teacher is guided by causal features or misled by spurious shortcuts on a given sample. To address these challenges, we propose Invariance-Weighted Distillation (IWD), a theoretically grounded framework that dynamically reweights training samples using an estimate of the teacher's causal reliance derived from prediction invariance across multiple synthetic environments. IWD perturbs spurious cues while preserving core semantics, assigning higher distillation weights to samples whose teacher predictions remain invariant, indicating greater reliance on causal rather than spurious features. We theoretically show that IWD reduces the student's Spurious-to-Causal (S2C) gradient ratio compared to standard uniformly weighted KD, driving the student toward more invariant representations. Experiments on four NLP benchmarks (MNLI, SQuAD-v2, CoNLL-2003 NER, and SST-2) across two model families (DeBERTa-v3 and Qwen-2.5) demonstrate that IWD consistently outperforms strong KD baselines on OOD evaluations while maintaining competitive in-distribution (ID) performance. Specifically, IWD achieves the highest accuracy in 15 out of 16 OOD benchmarks and improves average OOD performance over standard KD by 4.34 percentage points on NLI and 14.94 percentage points on QA.

\end{abstract}

\section{Introduction}

Large Language Models (LLMs) are central to modern applications due to their remarkable ability to solve complex tasks~\citep{openai2024gpt4,yang2025qwen25,deepseek2025r1}. However, high computational and memory demands hinder training, fine-tuning, and deployment in resource-constrained environments~\citep{girija2025optimizing}. Knowledge distillation (KD) addresses these challenges by training a compact student model to mimic the predictive behavior of a high-capacity teacher model across a designated distillation dataset~\citep{hinton2015distilling,xu2024surveykd}. Distilled students deliver competitive performance with drastically reduced resource overhead, enabling efficient real-world deployment.

However, student models often demonstrate significant performance degradation on out-of-distribution (OOD) data compared to the in-distribution (ID) data used during training~\citep{du2023robustness, stanton2021does}. OOD data can be formally defined as data originating from a distribution that differs from that encountered during any stage of training prior to deployment~\citep{shen2021towards}. Although teacher models themselves are not fully robust to OOD data, we observe that the performance gap between teacher and student models becomes substantially larger under OOD settings than under ID settings. This suggests that current KD techniques do not effectively transfer the teacher model's existing ability to handle OOD data and directly undermine this capability in the resulting student models~\citep{stacey2024distilling}.
This leads to two central questions: \textit{To what extent do student LLMs inherit OOD robustness from their teachers, and how can we transform distillation to enable reliable knowledge transfer under distribution shifts?}

We identify two main limitations of current KD techniques that cause OOD performance degradation. First, \textbf{under existing distillation methods, students learn spurious correlations from the distillation dataset instead of genuine causal relationships}. Standard KD approaches train the student model to mimic the teacher's outputs~\citep{hinton2015distilling, gu2023minillm} or intermediate representations~\citep{romero2015fitnets, sun2019pkd, sun2020mobilebert, jiao2020tinybert} by minimizing discrepancies over the distillation dataset. Additionally, distillation minimizes the standard cross-entropy loss over these training samples. In doing so, the student learns dataset-specific spurious correlations, reducing its ability to generalize when encountering OOD data. Second, \textbf{standard distillation treats all training samples uniformly, ignoring the teacher's ability to isolate causal signals from spurious correlations across individual samples}. In practice, a teacher model may successfully resist spurious shortcuts on samples with strong causal semantics, yet succumb to spurious correlations on others. Standard KD applies a uniform weight across all samples, inadvertently forcing the student to imitate the teacher even when the teacher relies on brittle, spurious associations. For reliable knowledge transfer, distillation should prioritize samples where the teacher exhibits high fidelity to invariant causal features while suppressing spurious influences.

\begin{figure*}[t]
\centering
\includegraphics[width=0.95\textwidth]{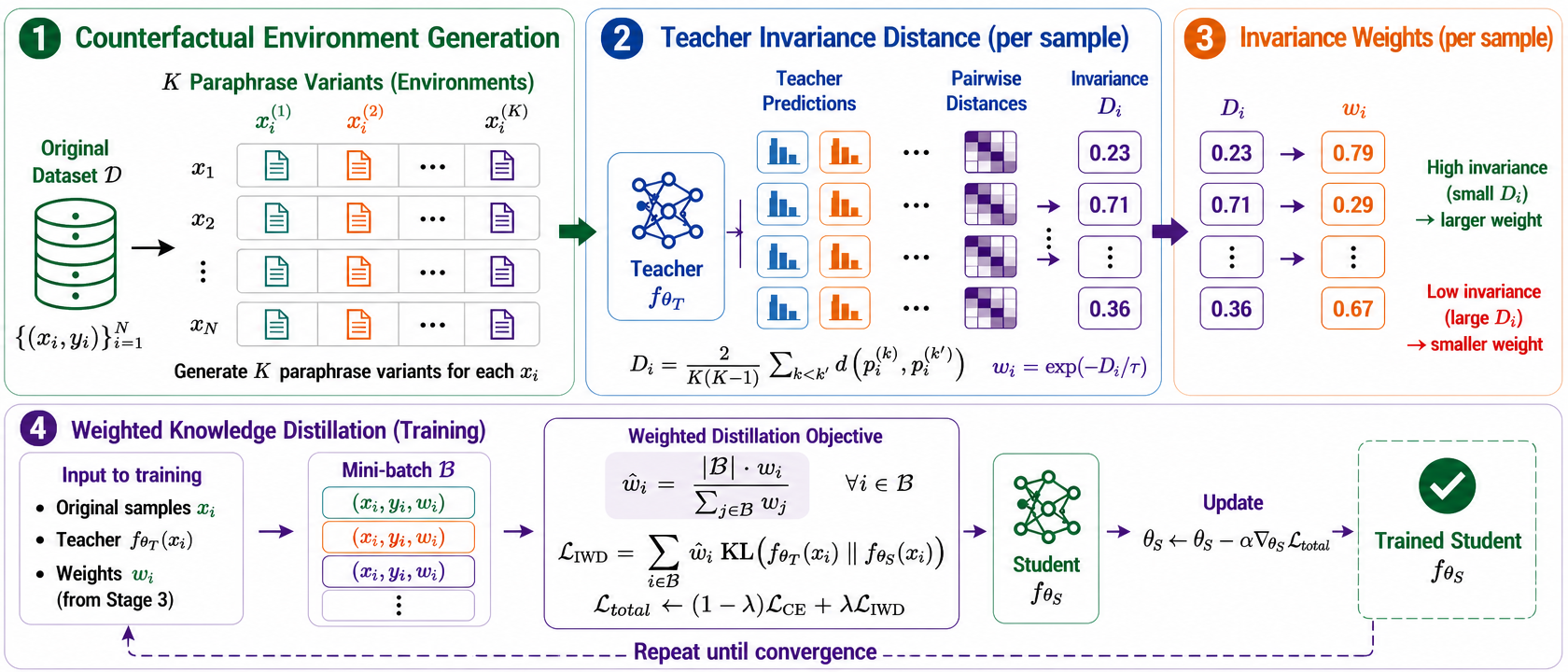}
\caption{\textbf{Overview of IWD.} The pipeline operates in four stages: 
\textbf{(1)~Counterfactual Environment Generation:} Generating semantically equivalent paraphrase variants; 
\textbf{(2)~Teacher Invariance Distance:} Measuring prediction consistency across variants using the teacher predictions; 
\textbf{(3)~Weight Calculation:} Assigning higher reliability weights to samples where teacher predictions remain invariant; and 
\textbf{(4)~Weighted Knowledge Distillation:} Training the student model using the weighted loss objective.}
\label{fig:overview}
\end{figure*}

To address these limitations, we propose \textbf{Invariance-Weighted Distillation (IWD)}, a novel strategy that uses a weighted objective function to prioritize samples where the teacher relies on causal relationships over spurious correlations (see Fig.~\ref{fig:overview}). We first generate multiple synthetic environments for each training sample by altering its spurious features while preserving its original semantic meaning and label. This environment generation step resolves the first limitation by exposing the student to diverse spurious perturbations across counterfactual variants with fixed core semantics. Distilling across these environments prevents the student from learning dataset-specific spurious correlations and forces it to learn invariant representations that generalize across distribution shifts. To address the second limitation, we measure the discrepancy among the teacher's output distributions across these environments. This distance quantifies the teacher's prediction invariance, i.e., how strongly the teacher relies on invariant features versus environment-specific cues. We incorporate this distance into the distillation loss as a weight, increasing emphasis on samples where the teacher exhibits high prediction invariance and, consequently, high fidelity to invariant causal features.

\paragraph{Contributions}Our main contributions are as follows:
\begin{itemize}
    \item \textbf{Conceptualizing OOD Vulnerabilities in KD.} We identify two key failure modes of standard KD under distribution shifts: (1) student models learn spurious correlations from the distillation dataset, and (2) current KD methods treat all training samples equally, failing to recognize the teacher's ability to distinguish between causally robust and spurious-dominated samples.
    
    \item \textbf{Invariance-Weighted Distillation (IWD).} We propose IWD, a novel distillation strategy that assigns sample-specific weights based on the teacher's prediction invariance across synthetically perturbed environments (see Fig.~\ref{fig:overview}). To overcome Limitation 1, IWD varies spurious features while preserving core causal semantics, exposing the student to diverse counterfactual variations of each sample. To address Limitation 2, IWD prioritizes samples where the teacher exhibits high invariance, and thus high fidelity to causal features. Our strategy equips the student with improved robustness to spurious features over standard KD and conventional data augmentation.

    \item \textbf{Theoretical Guarantee.} We provide a theoretical guarantee for IWD by leveraging a metric called the Spurious-to-Causal (S2C) gradient ratio (see Theorem~\ref{thm:s2c}). We prove that IWD yields a lower S2C ratio than standard distillation, guaranteeing that the student's expected gradient aligns more strongly with invariant  signals while reducing spurious feature transfer.
    
    \item \textbf{Empirical Validation.} We evaluate IWD across four distinct NLP tasks (NLI, extractive QA, NER, and sentiment classification) and two model families (DeBERTa-v3 and Qwen-2.5), using several OOD datasets per task under various subsampling regimes. IWD consistently improves OOD robustness while maintaining ID performance. For instance, across model families, IWD improves average OOD performance over standard KD by 4.34 and 14.94 percentage points on NLI and QA, respectively.

\end{itemize}

\paragraph{Related Works} Existing KD methods~\citep{hinton2015distilling,zhu2024surveycompression,xu2024surveykd,romero2015fitnets,sun2019pkd,zagoruyko2017attention,tian2020contrastive,kim2016sequence,hamman2025few,pmlr-v258-dissanayake25a} focuses on matching teacher and student outputs, intermediate features, or attention patterns. However, the robustness of distilled models under distribution shifts remains underexplored. While ShiftKD~\citep{zhang2025shiftkd} shows that distillation suffers substantial performance degradation under covariate and correlation shifts, other approaches attempt to enhance robustness via confidence-guided data augmentation~\citep{popp2025confidence}, adversarial distillation~\citep{goldblum2019ard}, or distillation from robust foundation models~\citep{zhou2023dad}. Yet, these methods primarily target conventional deep learning models, leaving OOD robustness in distilled LLMs underexplored.

Prior work on OOD robustness in distilled LLMs seeks to improve performance through data augmentation, sample selection, or adaptive distillation strategies.\citep{stacey2024distilling} use domain-targeted data augmentation and sampling strategies to improve NLI distillation under distribution shift. Prior works \citet{chen-etal-2023-disco} and \citet{hamman2025few} leverage LLMs to generate label-flipping counterfactuals, prompting students to learn causal features over spurious correlations. Other works introduce difficulty-aware distillation using model pruning~\citep{du2023robustness} or OOD-guided data generation to expose students to more diverse samples~\citep{gholami2024gold}. Notably, these works do not explicitly enforce invariance across semantically equivalent perturbations and rely on uniform distillation objectives that treat all training samples equally. \emph{In contrast, our approach evaluates the teacher's prediction invariance across synthetically perturbed environments with varying spurious features, leveraging the teacher-invariance signal to weight the distillation loss toward more reliable samples.}

\section{Preliminaries}

Knowledge distillation~\citep{xu2024surveykd,hamman2025few} trains a smaller student $f_{\theta_S}$ to imitate a larger teacher $f_{\theta_T}$. For an input $x$ with target label $y$, the student's loss is:
\begin{equation}
\label{eq:kd}
\mathcal{L}_{\text{KD}} {=} (1{-}\lambda)\,\mathcal{L}_{\text{CE}}(y, p_S(x)) {+} \lambda\,\mathcal{L}_{\text{KL}}\,(p_T(x) \parallel p_S(x)),
\end{equation}
Here, $\mathcal{L}_{\text{CE}}$ is the cross-entropy (CE) loss, $\mathcal{L}_{\text{KL}}$ is the KL divergence~\citep{kullback1951} at each generation step over the vocabulary, $p_{\text{S}}(x) = \mathrm{softmax}(f_{\theta_{\text{S}}}(x))$ and $p_{\text{T}}(x) = \mathrm{softmax}(f_{\theta_{\text{T}}}(x))$ are the output distributions, and $\lambda \in [0,1]$ balances hard-label CE and soft target imitation.

Out-of-distribution (OOD) refers to the setting in which a deployed model make predictions on inputs that come from a different distribution from the one it was trained on. A model is said to be OOD-robust if its performance does not degrade substantially under such a shift. Formally, let $P_{\text{train}}$ denote the training distribution and $P_{\text{OOD}} \neq P_{\text{train}}$ denote a shifted evaluation distribution. Standard supervised learning offers no formal guarantee in this regime, and trained models often degrade sharply~\citep{hendrycks2019benchmarking,geirhos2020shortcut}, particularly when they exploit features that are spuriously correlated with the label in $P_{\text{train}}$ but uncorrelated in $P_{\text{OOD}}$. OOD robustness is measured via the metric:
\begin{equation} \label{eq:phi}
\phi(\theta) \;=\;
\mathbb{E}_{(x,y)\sim P_{\text{OOD}}}\!\left[\,
\mathbf{1}[f_\theta(x) = y]\,\right],
\end{equation}
where $\theta$ denotes the parameters of the model under evaluation, $f_\theta(x)$ is its predicted label for input $x$, and $\mathbf{1}[\cdot]$ is the indicator function returning $1$ when its argument is true and $0$ otherwise. The expectation $\mathbb{E}_{(x,y)\sim P_{\text{OOD}}}[\cdot]$ averages over input-label pairs drawn from the shifted distribution, so $\phi(\theta)$ is the expected accuracy of $f_\theta$ on $P_{\text{OOD}}$.

\noindent \textbf{Problem Setup.}
Our goal is to study how KD in LLMs affects the OOD robustness of the student model. Formally, let $\mathcal{D} = \{(x_i, y_i)\}_{i=1}^{N}$ be a labeled training set drawn from an in-distribution (ID) source $P_{\text{train}}$, and let $P_{\text{OOD}}$ represent a shifted evaluation distribution. Given a teacher model $f_{\theta_{\text{T}}}$ pre-trained on $P_{\text{train}}$, our objective is to learn a smaller student model $f_{\theta_{\text{S}}}$ that maximizes the OOD evaluation metric $\phi(\theta_{\text{S}})$.

Specifically, our goal is to distill student parameters $\theta_S$ that learn genuine causal relationships while suppressing spurious correlations, and achieve higher OOD performance than standard distillation, i.e., $\phi(\theta_S) > \phi(\theta_S^{\text{KD}})$, where $\theta_S^{\text{KD}}$ denotes parameters learned via standard KD.

\section{Main Contributions}
\label{sec:method}
We introduce \textbf{Invariance-Weighted Distillation (IWD)}, a framework enhancing OOD robustness in distilled LLMs. Our strategy operates in four stages: (i)~generation of counterfactual environment variants that vary spurious features while preserving the semantic meaning; (ii)~computation of the teacher-invariance distance that quantifies the teacher's prediction variation across those environments; (iii)~mapping distance to sample weights; and (iv)~distilling using our proposed weighted distillation objective.

\textbf{Stage 1: Counterfactual Environment Generation.} 
For each training sample $(x_i, y_i) \in \mathcal{D}$, we construct a set of $K$ counterfactual variants:
\begin{equation}
\label{eq:env-set}
\mathcal{E}_i \;=\; \bigl\{\, x_i^{(0)},\, x_i^{(1)},\, \dots,\, x_i^{(K-1)} \,\bigr\},
\end{equation}
where $x_i^{(0)} \triangleq x_i$ is the original sample, and $x_i^{(1)}, \dots, x_i^{(K-1)}$ are paraphrases generated across different environments $e^{(k)}$. These environments are designed to perturb surface-level spurious features such as register, syntactic structure, domain vocabulary, capitalization, and sentence length, while holding the core semantic features determining $y_i$ fixed.

Formally, each input can be decomposed into an invariant \emph{causal} component $\mathbf{c}_i$ that determines the label and an environment-specific \emph{spurious} component $\mathbf{s}_i^{(k)}$. The generator ensures these variants satisfy counterfactual label invariance:
\begin{equation}
\label{eq:env-invariance}
P\big(y_i \,\big|\, \mathbf{c}_i,\, \mathbf{s}_i^{(k)}\big)
=
P\big(y_i \,\big|\, \mathbf{c}_i\big)
\quad \forall\, k \in \{0, 1, \dots, K-1\}.
\end{equation}
Equation~\ref{eq:env-invariance} guarantees that the label distribution conditioned on causal features remains identical across all environments, isolating variation strictly to the spurious components. We construct $\mathcal{E}_i$ offline by prompting an auxiliary language model (GPT-4.1-mini) to generate specific spurious perturbations. To enforce label invariance, any paraphrase that would alter the target label $y_i$ is rejected. The union of these environment sets forms the augmented training dataset $\mathcal{D}_{\mathrm{aug}} = \bigcup_{i=1}^{N} \mathcal{E}_i$, which is precomputed once and shared across all experiments.

Our semantics-preserving counterfactual environment generation strategy directly addresses the first limitation by preventing the student from learning dataset-specific spurious correlations. Exposing the model to these variants influences both components of the training objective:
(i)~distillation loss ($\mathcal{L}_{\text{KL}}$), which forces the student to match teacher predictions across varied spurious contexts rather than context-dependent noise; and
(ii)~task loss ($\mathcal{L}_{\text{CE}}$), which trains the student directly on perturbed inputs to ensure ground-truth predictions rely on invariant semantics rather than surface shortcuts.

\paragraph{Stage 2: Teacher Invariance Distance.} Using the environment set $\mathcal{E}_i$, we directly quantify the invariance of the teacher's predictions across counterfactual perturbations.

\begin{definition}[Teacher Invariance Distance]
\label{def:invariance_distance}
Let $\hat{y}_i^{(k)} = f_{\theta_{\text{T}}}\big(x_i^{(k)}\big)$ denote the teacher's prediction on the $k$-th variant in a task-appropriate output space. The teacher invariance distance $D_i$ is defined as the mean pairwise disagreement across all environment pairs:
\begin{equation}
\label{eq:D_i}
D_i \;\triangleq\; \binom{K}{2}^{-1} \sum_{0 \le u < v \le K-1} d \big(\hat{y}_i^{(u)},\, \hat{y}_i^{(v)}\big).
\end{equation}
Here, $d(\cdot, \cdot)$ is a non-negative, symmetric distance metric.
\end{definition}

Since causal features $\mathbf{c}_i$ remain fixed across variants, $D_i$ directly isolates the teacher's sensitivity to spurious perturbations $\mathbf{s}_i^{(k)}$. A small $D_i$ indicates that predictions are invariant and grounded in causal semantics, making the sample a reliable source for distillation. Conversely, a large $D_i$ reveals that predictions fluctuate with surface cues, signaling the teacher's reliance on spurious features that will not generalize in OOD settings. The distance metric $d(\cdot, \cdot)$ is tailored to the output space of the task. For classification (e.g., SST-2, MNLI), $d$ measures distributional divergence over softmax outputs; for structured symbolic outputs like QA text spans or NER entities (SQuAD-v2, CoNLL-2003), $d$ computes per-sample $F_1$ distance ($1 - F_1$) on decoded string predictions.

\paragraph{Stage 3: Weight Calculation.} To convert the invariance distance $D_i$ into a per-sample reliability weight, we apply a monotonically decreasing exponential function:
$w_i \;=\; \exp(-D_i / \tau).$
Here, $\tau > 0$ is a temperature hyperparameter. A small $\tau$ heavily penalizes teacher variance, while a large $\tau$ approaches uniform weighting, recovering environment-augmented distillation as $\tau \to \infty$. Furthermore, to prevent implicit learning rate shifts caused by varying $D_i$ distributions across mini-batches, we normalize weights over each mini-batch $\mathcal{B}$:
$\widetilde{w}_i \;=\; |\mathcal{B}| \;\cdot\; \frac{w_i}{\sum_{j \in \mathcal{B}} w_j},$ ensuring $\frac{1}{|\mathcal{B}|} \sum_{i \in \mathcal{B}} \widetilde{w}_i \;=\; 1.$
This normalization preserves a unit average weight per batch while adjusting relative per-sample importance.

\paragraph{Stage 4: Weighted Knowledge Distillation.} During training, we sample mini-batches $\mathcal{B} \subseteq \mathcal{D}_{\mathrm{aug}}$, where each item corresponds to an environment variant $x_i^{(k)}$ derived from parent sample $i$. Given the batch-normalized weights $\widetilde{w}_i$, the weighted distillation objective over the mini-batch is:
\begin{equation}
\label{eq:iwd-loss}
\mathcal{L}_{\mathrm{IWD}}(\theta_{\text{S}}) = \sum_{(i, k) \in \mathcal{B}} \widetilde{w}_i \cdot \mathrm{KL} \big(f_{\theta_{\text{T}}}(x_i^{(k)}) \parallel f_{\theta_{\text{S}}}(x_i^{(k)})\big).
\end{equation}
Combining $\mathcal{L}_{\mathrm{IWD}}$ with the student task loss across variants, $\mathcal{L}_{\mathrm{CE}} = \sum_{(i, k) \in \mathcal{B}} \mathrm{CE}\big(y_i, f_{\theta_{\text{S}}}(x_i^{(k)})\big)$, yields our total training loss: $\mathcal{L}_{\mathrm{total}} = (1 - \lambda) \mathcal{L}_{\mathrm{CE}} + \lambda \mathcal{L}_{\mathrm{IWD}}$ (Eq.~\ref{eq:kd}). By dynamically scaling the loss with $\widetilde{w}_i$, IWD prioritizes samples where the teacher demonstrates invariant predictions while aggressively down-weighting unstable targets ($w_i \to 0$). This suppresses teacher signals on brittle samples, forcing the student to rely on ground-truth supervision instead. The complete training procedure for IWD is detailed in Algorithm~\ref{alg:iwd}.

\begin{algorithm}[!t]
\caption{Invariance Weighted Distillation (IWD)}
\label{alg:iwd}
\textbf{Input}:  $\mathcal{D}=\{(x_i,y_i)\}_{i=1}^{N}$, 
               teacher $f_{\theta_T}$, student $f_{\theta_S}$\\
\textbf{Parameters}: No.\ of environments $K$, temperature $\tau$, 
                   learning rate $\alpha$, 
                   distance metric $d(\cdot,\cdot)$\\
\textbf{Output}: Trained student $f_{\theta_S}$
\begin{algorithmic}[1]
\STATE \textbf{-- Stage 1: Counterfactual Environment Generation --}
\STATE $\mathcal{D}_{\mathrm{aug}} \leftarrow \emptyset$
\FOR{each $(x_i, y_i) \in \mathcal{D}$}
    \STATE Set $x_i^{(0)} \leftarrow x_i$ \COMMENT{Original sample}
    \STATE Generate $K-1$ paraphrases $\bigl\{x_i^{(1)},\ldots,x_i^{(K-1)}\bigr\}$
    \STATE $\mathcal{E}_i \leftarrow \bigl\{x_i^{(0)}, x_i^{(1)}, \ldots, x_i^{(K-1)}\bigr\}$
    \STATE $\mathcal{D}_{\mathrm{aug}} \leftarrow \mathcal{D}_{\mathrm{aug}} \cup \mathcal{E}_i$
\ENDFOR

\STATE \textbf{-- Stage 2: Teacher Invariance Distance --}
\FOR{each $i \in \{1, \ldots, N\}$}
    \STATE $\hat{y}_i^{(k)} \leftarrow f_{\theta_T}(x_i^{(k)}) \quad \forall\, k \in \{0, \ldots, K-1\}$
    \STATE $D_i \leftarrow \dbinom{K}{2}^{-1} 
           \displaystyle\sum_{0 \le u < v \le K-1} 
           d\!\left(\hat{y}_i^{(u)},\,\hat{y}_i^{(v)}\right)$
\ENDFOR

\STATE \textbf{-- Stage 3: Weight Calculation --}
\FOR{each $i \in \{1, \ldots, N\}$}
    \STATE $w_i \leftarrow \exp(-D_i/\tau)$ \COMMENT{Parent weight once per-sample}
\ENDFOR

\STATE \textbf{-- Stage 4: Weighted Knowledge Distillation --}
\WHILE{not converged}
    \STATE Sample mini-batch $\mathcal{B} \subseteq \mathcal{D}_{\mathrm{aug}}$, where each item is indexed by $(i, k)$
    \STATE $\tilde{w}_i \leftarrow 
           |\mathcal{B}|\cdot w_i \;/\; 
           \textstyle\sum_{(j, k')\in\mathcal{B}} w_j
           \quad\forall\, (i, k)\in\mathcal{B}$
    \STATE $\mathcal{L}_{\mathrm{CE}} \leftarrow 
           \displaystyle\sum_{(i, k)\in\mathcal{B}} 
           \mathrm{CE} \big(y_i,\, f_{\theta_S}(x_i^{(k)})\big)$
    \STATE $\mathcal{L}_{\mathrm{IWD}} \leftarrow 
           \displaystyle\sum_{(i, k)\in\mathcal{B}} 
           \tilde{w}_i\cdot 
           \mathrm{KL} \big(
             f_{\theta_T}(x_i^{(k)})\,\|\,f_{\theta_S}(x_i^{(k)})
           \big)$
    \STATE $\mathcal{L}_{\mathrm{total}} \leftarrow (1 - \lambda)\mathcal{L}_{\mathrm{CE}} + \lambda \mathcal{L}_{\mathrm{IWD}}$
    \STATE $\theta_S \leftarrow 
           \theta_S - \alpha\, 
           \nabla_{\theta_S}\mathcal{L}_{\mathrm{total}}$
\ENDWHILE
\end{algorithmic}
\end{algorithm}

\paragraph{Theoretical Guarantee.} We now provide a theoretical guarantee for weighting the distillation objective by the teacher invariance distance. We first introduce the \emph{Spurious-to-Causal (S2C) gradient ratio} (Definition~\ref{def:s2c}), which measures gradient contamination from spurious feature variations under fixed causal semantics. We show that IWD achieves a strictly lower S2C ratio than uniform distillation, aligning student updates more closely with invariant causal features.

\begin{definition}[Spurious-to-Causal (S2C) Gradient Ratio]
\label{def:s2c}
Let $p_T(x) = \operatorname{softmax}\big(z_T(x)\big)$ and $p_S(x) = \operatorname{softmax}\big(z_S(x)\big)$ denote teacher and student output distributions for logits $z_T, z_S$. Decomposing the teacher's output distribution into invariant and spurious components,
\begin{equation}
p_T\big(x_i^{(e)}\big) \;=\; \pi_i + s_i^{(e)},
\qquad
\pi_i \;:=\; \mathbb{E}_e\!\big[p_T\big(x_i^{(e)}\big)\big],
\label{eq:prob-decomp}
\end{equation}
where $\pi_i \in \Delta^{C-1}$ is the invariant component, and $s_i^{(e)}$ is the spurious component with $\mathbb{E}_e\big[s_i^{(e)}\big] = \mathbf{0}$ and variance $\mathbb{E}_e\big[\|s_i^{(e)}\|^2\big] = \sigma_i^2$. The S2C gradient ratio under sample weighting $W$ is defined as:
\begin{equation}
\label{eq:s2c}
\rho^2(W)
\;=\;
\mathbb{E}_{i\sim W}\!\left[\|g_i^S\|^2\right]
     \big/ \mathbb{E}_{i\sim W}\!\left[\|g_i^C\|^2\right],
\end{equation}
where $g_i^C = \big(\pi_i - p_S(x_i)\big)^{\!\top} J_i$ is the causal gradient component, $g_i^S = s_i^{(e_i)\top} J_i$ is the spurious gradient component, and $J_i := \nabla_{\theta_S} z_S(x_i) \in \mathbb{R}^{C\times|\theta_S|}$ is the student logit Jacobian.
\end{definition}
A lower S2C ratio indicates a causally dominated gradient, ensuring the student prioritizes invariant predictions over non transferable surface perturbations. We analyze the distillation objective on dataset $\mathcal{D}$ across environments $\{\mathcal{E}_i\}_{i=1}^N$, comparing uniform weighting ($W_i = 1/N$) against IWD weighting ($W_i^{\mathrm{IWD}} \propto \exp(-D_i / \tau)$). As $K \to \infty$, $D_i$ converges to a strictly increasing function of spurious variance $\sigma_i^2$ (Proposition~\ref{prop:Di-consistency} in Appendix~A.4 ), allowing $D_i$ to directly isolate teacher spurious reliance. See Appendix~A for more details.

Our analysis relies on three standard assumptions:

\noindent \textbf{(A1) Zero-Mean Spurious Noise:} Spurious components $s_i^{(e_i)}$ are independent across samples with zero mean and variance $\sigma_i^2$, representing unstructured prediction variation rather than systematic dataset bias.
    
\noindent\textbf{(A2) Independence of Spurious Variance:} Spurious variance is independent of both the causal residual and gradient scale: $\sigma_i^2 \perp \big(c_i^2, \Phi_i\big)$, where $c_i^2 := \|\pi_i {-} p_S(x_i)\|^2$. This decouples teacher prediction invariance from student task difficulty.
    
\noindent\textbf{(A3) Isotropic Logit Jacobians:} Student logit Jacobians satisfy $J_i J_i^{\top} = \Phi_i\, I_C$ with per sample scale $\Phi_i \perp \sigma_i^2$, ensuring parameter updates do not inherently favor spurious directions over causal ones.

\begin{theorem}[IWD Reduces the S2C Gradient Ratio]
\label{thm:s2c}
Under Assumptions \textbf{(A1)}--\textbf{(A3)}, let
$W_i^{\mathrm{IWD}}=\exp(-\sigma_i^2/\tau)/Z$
and
$W_i^{\mathrm{unif}}=1/N$.
Then,

$\rho^2\!\left(W^{\mathrm{IWD}}\right)
\le
\rho^2\!\left(W^{\mathrm{unif}}\right)$,

with equality if and only if $\sigma_i^2$ is constant across all samples.
\end{theorem}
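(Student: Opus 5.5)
We reduce both sides of the inequality to expectations of scalar per-sample quantities, then apply a covariance (Chebyshev-type) inequality to the weights $W_i\propto e^{-\sigma_i^2/\tau}$.

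\emph{Step 1: per-sample squared norms.} By (A3), for any vector $v\in\mathbb{R}^C$ we have $\|v^\top J_i\|^2 = v^\top J_iJ_i^\top v = \Phi_i\|v\|^2$. Hence
\[
\|g_i^C\|^2=\Phi_i c_i^2,
\qquad
\|g_i^S\|^2=\Phi_i\|s_i^{(e_i)}\|^2 .
\]
Taking the expectation over the environment draw $e_i$ and using (A1) gives $\mathbb{E}\|g_i^S\|^2=\Phi_i\sigma_i^2$. The ratio then becomes
\[
\rho^2(W)=\frac{\sum_i W_i\,\Phi_i\sigma_i^2}{\sum_i W_i\,\Phi_i c_i^2}.
\]
Here we read $\mathbb{E}_{i\sim W}$ as including the randomness of $e_i$. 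The cross term between causal and spurious parts does not enter, because the definition separates the two components.

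\emph{Step 2: decoupling via (A2).} We treat $i$ as random, drawn from the population under uniform sampling, with $\sigma_i^2$ independent of the pair $(c_i^2,\Phi_i)$. The IWD weight depends on $i$ only through $\sigma_i^2$. Therefore reweighting by $W^{\mathrm{IWD}}$ changes the law of $\sigma_i^2$ but leaves the law of $(c_i^2,\Phi_i)$ intact, and independence is preserved under the tilted measure. Write $\mathbb{E}$ for the uniform expectation and $\tilde{\mathbb{E}}$ for the tilted one, $\tilde{\mathbb{E}}[h]=\mathbb{E}[h\,e^{-\sigma^2/\tau}]/\mathbb{E}[e^{-\sigma^2/\tau}]$. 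Then:
\begin{itemize}
\item the denominator satisfies $\tilde{\mathbb{E}}[\Phi c^2]=\mathbb{E}[\Phi c^2]$;
\item the numerator factors as $\tilde{\mathbb{E}}[\Phi\sigma^2]=\mathbb{E}[\Phi]\,\tilde{\mathbb{E}}[\sigma^2]$, and similarly $\mathbb{E}[\Phi\sigma^2]=\mathbb{E}[\Phi]\,\mathbb{E}[\sigma^2]$.
\end{itemize}
So the claim reduces to the scalar inequality $\tilde{\mathbb{E}}[\sigma^2]\le \mathbb{E}[\sigma^2]$.

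\emph{Step 3: covariance inequality.} The claim $\tilde{\mathbb{E}}[\sigma^2]\le\mathbb{E}[\sigma^2]$ is equivalent to
\[
\mathrm{Cov}\big(\sigma^2,\,e^{-\sigma^2/\tau}\big)\le 0.
\]
This holds because $t\mapsto t$ is increasing and $t\mapsto e^{-t/\tau}$ is strictly decreasing. By Chebyshev's sum inequality, or equivalently the symmetrization identity
\[
\mathrm{Cov}(f(X),g(X))=\tfrac12\,\mathbb{E}\big[(f(X)-f(X'))(g(X)-g(X'))\big]
\]
with $X'$ an independent copy of $X$, the covariance is nonpositive. Every product in the symmetrized expectation is $\le 0$, and it is strictly negative whenever $\sigma_i^2\neq\sigma_j^2$. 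So equality holds if and only if $\sigma^2$ is almost surely constant, which in the finite-sample setting means constant across all samples. Conversely, if $\sigma_i^2$ is constant then $W^{\mathrm{IWD}}=W^{\mathrm{unif}}$ and equality is trivial. Strictness in the "only if" direction also needs $\mathbb{E}[\Phi]>0$ and a finite, positive denominator, which we state as nondegeneracy.

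\emph{Main obstacle.} The delicate point is Step 2: making the independence in (A2)/(A3) rigorous for a finite dataset, where literal empirical factorization $\frac1N\sum_i\Phi_i\sigma_i^2=\big(\frac1N\sum_i\Phi_i\big)\big(\frac1N\sum_i\sigma_i^2\big)$ need not hold exactly. I would first interpret (A2)–(A3) at the population level, with indices drawn i.i.d. and the expectation $\mathbb{E}_{i\sim W}$ taken over the data-generating distribution, so that factorization is exact. If a finite-sample statement is required, I would instead assume empirical uncorrelatedness under both weightings. Since $W^{\mathrm{IWD}}$ is a function of $\sigma^2$ alone, empirical independence of $\sigma^2$ from $(c^2,\Phi)$ in the joint empirical law suffices for this.
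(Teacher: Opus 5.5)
Your proposal is correct and follows essentially the same route as the paper: reduce $\rho^2(W)$ via (A3) to $\sum_i W_i\Phi_i\sigma_i^2/\sum_i W_i\Phi_i c_i^2$, use the (A2)--(A3) independence to leave the denominator unchanged and factor out $\mathbb{E}[\Phi]$ in the numerator, then apply the Chebyshev covariance inequality to $\sigma^2$ against the decreasing weight. Your population-level tilted expectation $\tilde{\mathbb{E}}[\sigma^2]$ is exactly the paper's $\mathbb{E}[a(\sigma^2)\sigma^2]$, which the paper obtains as a large-$N$ law-of-large-numbers limit; your symmetrization argument also makes explicit the equality case that the paper only asserts.
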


\section{Experiments}
\label{sec:experiments}

We evaluate \textbf{IWD} on four tasks spanning the major NLP output structures---paired classification, single-sentence classification, span extraction, and token-level labeling---across two model families of markedly different scale and architecture. Additional details are given in Appendix~B.

\definecolor{familybg}{rgb}{0.91, 0.93, 0.96} 

\begin{table*}[t]
\centering

\setlength{\tabcolsep}{2.5pt}

{\small 
\begin{tabular}{l cccccc}
\toprule
\textbf{Task / Dataset} & \textbf{Vanilla KD} & \textbf{RevKL} & \textbf{DKD} & \textbf{LWD} & \textbf{AugKD} & \textbf{IWD (Ours)} \\
\midrule
\rowcolor{familybg}
\multicolumn{7}{c}{\textbf{DeBERTa-v3 Family (Encoder-Only)}} \\
\midrule
\textbf{NLI (Accuracy)} \\
~~MNLI (ID) & $82.02 \pm 0.29$ & $82.08 \pm 0.21$ & $79.91 \pm 0.47$ & $82.56 \pm 0.12$ & $\mathbf{85.31 \pm 0.18}$ & $\underline{84.66 \pm 0.27}$ \\
~~HANS (OOD-1) & $52.34 \pm 0.57$ & $51.45 \pm 0.81$ & $51.33 \pm 0.79$ & $52.99 \pm 1.10$ & $\underline{60.95 \pm 1.30}$ & $\mathbf{61.45 \pm 1.01}$ \\
~~SNLI (OOD-2) & $78.26 \pm 0.58$ & $77.49 \pm 0.26$ & $75.19 \pm 0.85$ & $78.97 \pm 0.92$ & $\underline{80.21 \pm 0.24}$ & $\mathbf{80.91 \pm 0.50}$ \\
\cmidrule(lr){1-7}
\textbf{Extractive QA (F1)} \\
~~SQuAD-v2 (ID) & $\underline{70.64 \pm 0.20}$ & $\mathbf{72.40 \pm 0.81}$ & $68.01 \pm 1.81$ & $70.97 \pm 1.35$ & $51.50 \pm 0.55$ & $51.40 \pm 0.77$ \\
~~NewsQA (OOD-1) & $39.47 \pm 0.77$ & $39.95 \pm 0.87$ & $44.26 \pm 0.68$ & $36.10 \pm 2.48$ & $\underline{54.35 \pm 0.66}$ & $\mathbf{55.11 \pm 0.45}$ \\
~~Natural Questions (OOD-2) & $28.23 \pm 0.74$ & $28.11 \pm 2.67$ & $30.52 \pm 2.33$ & $25.02 \pm 5.17$ & $\underline{51.93 \pm 0.94}$ & $\mathbf{52.06 \pm 0.33}$ \\
\cmidrule(lr){1-7}
\textbf{NER (Entity F1)} \\
~~CoNLL-2003 (ID) & $92.18 \pm 0.11$ & $90.59 \pm 0.21$ & $90.11 \pm 0.21$ & $90.71 \pm 0.20$ & $\underline{92.63 \pm 0.22}$ & $\mathbf{92.64 \pm 0.10}$ \\
~~WNUT-17 (OOD-1) & $\underline{47.84 \pm 0.42}$ & $43.68 \pm 0.51$ & $46.88 \pm 0.35$ & $46.34 \pm 0.52$ & $46.78 \pm 0.48$ & $\mathbf{48.01 \pm 0.31}$ \\
~~OntoNotes 5.0 (OOD-2) & $78.99 \pm 0.21$ & $76.26 \pm 0.19$ & $77.68 \pm 0.32$ & $78.12 \pm 0.11$ & $\underline{80.27 \pm 0.22}$ & $\mathbf{80.38 \pm 0.11}$ \\
\cmidrule(lr){1-7}
\textbf{Sentiment Analysis (Accuracy)} \\
~~SST-2 (ID) & $92.73 \pm 0.19$ & $\underline{92.84 \pm 0.39}$ & $92.71 \pm 0.22$ & $92.00 \pm 0.30$ & $\mathbf{93.21 \pm 0.15}$ & $92.78 \pm 0.18$ \\
~~Yelp-Polarity (OOD-1) & $\underline{91.39 \pm 0.19}$ & $91.28 \pm 0.24$ & $91.36 \pm 0.38$ & $90.26 \pm 0.65$ & $90.98 \pm 0.56$ & $\mathbf{91.42 \pm 0.12}$ \\
~~CR (OOD-2) & $\underline{87.29 \pm 1.17}$ & $87.08 \pm 0.75$ & $87.18 \pm 0.48$ & $85.81 \pm 0.25$ & $85.32 \pm 0.48$ & $\mathbf{87.58 \pm 0.98}$ \\
\midrule
\rowcolor{familybg}
\multicolumn{7}{c}{\textbf{Qwen-2.5 Family (Decoder-Only)}} \\
\midrule
\textbf{NLI (Accuracy)} \\
~~MNLI (ID) & $81.88 \pm 0.50$ & $82.13 \pm 0.30$ & $79.24 \pm 0.53$ & $80.27 \pm 0.36$ & $\underline{82.48 \pm 0.20}$ & $\mathbf{83.21 \pm 0.19}$ \\
~~HANS (OOD-1) & $51.69 \pm 0.38$ & $52.04 \pm 0.85$ & $51.61 \pm 1.06$ & $51.20 \pm 0.63$ & $\underline{52.79 \pm 1.21}$ & $\mathbf{54.95 \pm 1.68}$ \\
~~SNLI (OOD-2) & $78.25 \pm 0.63$ & $\underline{79.03 \pm 0.96}$ & $75.28 \pm 1.68$ & $76.38 \pm 0.86$ & $76.99 \pm 0.79$ & $\mathbf{80.57 \pm 0.67}$ \\
\cmidrule(lr){1-7}
\textbf{Extractive QA (F1)} \\
~~SQuAD-v2 (ID) & $\mathbf{75.48 \pm 0.31}$ & $\underline{75.02 \pm 0.23}$ & $74.68 \pm 0.37$ & $49.86 \pm 0.29$ & $47.96 \pm 0.43$ & $48.84 \pm 0.21$ \\
~~NewsQA (OOD-1) & $16.43 \pm 0.73$ & $16.03 \pm 1.30$ & $14.88 \pm 2.11$ & $5.11 \pm 1.21$ & $\underline{19.53 \pm 0.65}$ & $\mathbf{20.34 \pm 0.18}$ \\
~~Natural Questions (OOD-2) & $40.89 \pm 0.81$ & $43.73 \pm 2.19$ & $35.57 \pm 1.02$ & $2.37 \pm 1.05$ & $\underline{57.12 \pm 0.60}$ & $\mathbf{57.25 \pm 0.50}$ \\
\cmidrule(lr){1-7}
\textbf{NER (Entity F1)} \\
~~CoNLL-2003 (ID) & $68.30 \pm 0.25$ & $67.69 \pm 0.30$ & $65.00 \pm 0.22$ & $35.52 \pm 0.36$ & $\mathbf{69.83 \pm 0.20}$ & $\underline{69.29 \pm 0.10}$ \\
~~WNUT-17 (OOD-1) & $\underline{27.48 \pm 0.32}$ & $24.88 \pm 0.42$ & $26.13 \pm 0.52$ & $12.81 \pm 1.30$ & $25.02 \pm 0.43$ & $\mathbf{27.77 \pm 0.33}$ \\
~~OntoNotes 5.0 (OOD-2) & $\underline{54.90 \pm 0.40}$ & $54.93 \pm 1.00$ & $53.57 \pm 0.68$ & $26.58 \pm 0.90$ & $54.67 \pm 0.42$ & $\mathbf{55.03 \pm 0.67}$ \\
\cmidrule(lr){1-7}
\textbf{Sentiment Analysis (Accuracy)} \\
~~SST-2 (ID) & $\mathbf{93.46 \pm 0.54}$ & $93.17 \pm 0.58$ & $\underline{93.43 \pm 0.22}$ & $60.17 \pm 3.97$ & $92.04 \pm 0.89$ & $90.11 \pm 0.66$ \\
~~Yelp-Polarity (OOD-1) & $84.69 \pm 3.11$ & $\mathbf{85.99 \pm 1.59}$ & $\underline{84.99 \pm 2.91}$ & $51.17 \pm 1.63$ & $77.80 \pm 0.31$ & $83.64 \pm 0.86$ \\
~~CR (OOD-2) & $74.57 \pm 10.66$ & $\underline{76.87 \pm 7.02}$ & $74.95 \pm 9.50$ & $62.07 \pm 3.33$ & $73.00 \pm 1.09$ & $\mathbf{77.80 \pm 2.29}$ \\
\bottomrule
\end{tabular}
} 
\caption{Main performance comparison of distilled student models across four NLP tasks for both DeBERTa-v3 and Qwen-2.5 families, evaluated on one ID and two OOD benchmarks per task. Results report mean $\pm$ standard deviation across multiple random seeds. The best performance per benchmark is in bold, and the second-best is underlined.}
\label{tab:main_results}
\end{table*}

\noindent \textbf{Datasets and Models.} We train on standard ID splits and evaluate OOD generalization across two robustness benchmarks per task (Table~\ref{tab:datasets}). To cover diverse architectures, we distill both encoder-only models (\texttt{deberta-v3-xsmall}, $\sim$71M parameters from \texttt{-base} teachers) and decoder-only models (\texttt{Qwen2.5-0.5B}, $\sim$494M from \texttt{-1.5B} teachers); full setup details are in Appendix~B.2.


\noindent \textbf{Baselines.} We compare against five key KD baselines: (1)~\textbf{Vanilla KD}~\citep{hinton2015distilling} (forward KL divergence); (2)~\textbf{RevKL}~\citep{gu2023minillm} (reverse KL divergence); (3)~\textbf{DKD}~\citep{zhao2022dkd} (decoupled target and non-target KL); (4)~\textbf{LWD}~\citep{romero2015fitnets,sun2019pkd,sun2020mobilebert,jiao2020tinybert} (intermediate-layer representation alignment); and (5)~\textbf{AugKD} (Vanilla KD on counterfactually augmented data, isolating our invariance weighting scheme from pure data augmentation).

\noindent \textbf{Training Setup.} We train models using AdamW in mixed precision ($T=2.0, \tau=1.0, K=6$); full implementation details and hyperparameters are provided in Appendix~B.4.


\subsection{Results and Analysis}

\paragraph{IWD outperforms baselines in OOD settings.} Across both architectures, IWD achieves the top performance in \textbf{15 of 16} OOD settings, with the largest gains in Extractive QA and NLI (see Table~\ref{tab:main_results}). On DeBERTa-v3 Extractive QA, IWD outperforms Vanilla KD by \textbf{+23.83 points} on Natural Questions ($28.23\% \rightarrow 52.06\%$) and \textbf{+15.64 points} on NewsQA ($39.47\% \rightarrow 55.11\%$). On Qwen2.5, Natural Questions F1 rises by \textbf{+16.36 points} ($40.89\% \rightarrow 57.25\%$). On NLI (HANS), IWD outperforms Vanilla KD by \textbf{+9.12 points} on DeBERTa-v3 ($61.45\%$) and \textbf{+3.26 points} on Qwen2.5 ($54.95\%$). For NER and Sentiment Analysis, IWD consistently matches or exceeds top baselines across all OOD splits (e.g., $80.38\%$ on OntoNotes~5.0 and $87.58\%$ on CR).

\paragraph{IWD is stable across experiments.} Non-augmented distillation baselines exhibit high performance variance across tasks and architectures (see Table~\ref{tab:main_results}). For instance, LWD achieves competitive DeBERTa-v3 NLI performance ($52.99\%$ on HANS) but collapses on Qwen2.5 QA ($2.37\%$ on Natural Questions). Similarly, DKD yields OOD gains on DeBERTa-v3 QA ($44.26\%$ on NewsQA) yet lags Vanilla KD on NLI and Sentiment Analysis, while RevKL degrades on DeBERTa-v3 QA despite strong Qwen2.5 results. This variance highlights a key advantage of IWD: while standard baselines depend heavily on the task and architecture, IWD remains stable and consistently top-performing across both encoder-only and decoder-only models. This instability of baselines stems from rigid assumptions embedded within these objectives, such as target logit geometry, specific head dynamics, or a particular teacher softmax shape, that hold only for select backbone-task pairs. IWD instead evaluates teacher invariance under counterfactual perturbations. Invariance to spurious shifts is a model-agnostic property; thus, IWD operates seamlessly across both encoder-only and decoder-only students and stays in the top-performing tier on every OOD split we tested.

\paragraph{IWD outperforms basic data augmentation.} Compared to AugKD, which uses augmented inputs without variance-aware weighting, IWD achieves a $100\%$ win rate across \textbf{16 of 16} OOD settings, improving by \textbf{+1.60 points} on average (see Table~\ref{tab:main_results}). Advantages are most pronounced on decoder-only architectures like Qwen2.5, where IWD outperforms AugKD on Yelp-Polarity (\textbf{+4.84 points}), CR (\textbf{+4.80 points}), and SNLI (\textbf{+3.58 points}). This confirms that applying invariant weight allocation ($w_i$) to environment variants provides a clear advantage over basic data augmentation alone.

\noindent \textbf{Encoder vs.\ Decoder OOD Performance.} Under IWD, encoder-only DeBERTa-v3 achieves higher OOD performance than decoder-only Qwen2.5, widening the gap observed under Vanilla KD (see Table~\ref{tab:main_results}). On HANS, DeBERTa's lead over Qwen expands from \textbf{+0.65 points} under Vanilla KD to \textbf{+6.50 points} under IWD; on NewsQA, this lead reaches \textbf{+34.77 points}. We attribute this gap to how teacher distance is measured. Encoder-only models output probabilities over just a few task labels, making teacher disagreement clean and straightforward to measure. In contrast, decoder-only models generate text token-by-token across a vast vocabulary. Here, minor surface variations, such as alternative word choices or sentence structures, add noise to the distance metric, even when the underlying meaning remains identical. Thus, while generative students still benefit from IWD, fully unlocking their performance may require measuring invariance at the semantic level rather than the token level.

\noindent \textbf{IWD Performance on ID.} In-distribution (ID), IWD remains fully competitive with standard baselines across tasks, achieving high accuracy on MNLI ($84.66\%$), CoNLL-2003 ($92.64\%$), and SST-2 ($92.78\%$) (see Table~\ref{tab:main_results}). However, a limitation emerges in \textbf{Extractive QA}: on SQuAD-v2, both AugKD and IWD exhibit a sharp ID performance drop compared to Vanilla KD ($51.40\%$ vs. $70.64\%$ on DeBERTa-v3, and $48.84\%$ vs. $75.48\%$ on Qwen2.5). This ID--OOD trade-off likely stems from environment generation shifting the training distribution away from the ID baseline, highlighting a direction for future research.

\noindent \textbf{Task Structure and IWD Performance.} IWD gains correlate with the baseline OOD degradation and shortcut susceptibility of a task. On DeBERTa-v3 Extractive QA, Vanilla KD drops over $42$ percentage points from the ID SQuAD-v2 benchmark ($70.64\%$) to OOD Natural Questions ($28.23\%$). IWD dramatically recovers this gap, yielding $+23.83$ points on Natural Questions ($28.23\% \rightarrow 52.06\%$) and $+15.64$ points on NewsQA ($39.47\% \rightarrow 55.11\%$). Conversely, Sentiment Analysis shows high baseline stability under Vanilla KD---dropping just $1.34$ points from SST-2 ($92.73\%$) to Yelp-Polarity ($91.39\%$)---resulting in modest IWD gains ($+0.03$ to $+0.29$ points). This aligns with task geometry: Extractive QA and NLI heavily exploit lexical and positional shortcuts exposed by counterfactuals, whereas Sentiment Analysis and NER are inherently context-invariant. Thus, while IWD reliably improves shortcut-heavy tasks, stable tasks require precise tuning to maximize OOD generalization.

\begin{table}[t]
\centering

\setlength{\tabcolsep}{4.5pt}

{\small
\begin{tabular}{l cccc}
\toprule
\textbf{Data Size} & \textbf{MNLI-m} & \textbf{MNLI-mm} & \textbf{HANS} & \textbf{SNLI} \\
\textbf{($N$)} & \textbf{(ID)} & \textbf{(ID)} & \textbf{(OOD1)} & \textbf{(OOD2)} \\
\midrule
\multicolumn{5}{l}{\textbf{DeBERTa-v3}} \\
~~5k  & 83.27 & 83.79 & 51.98 & 79.73 \\
~~8k  & 84.29 & 84.54 & 52.40 & 79.79 \\
~~10k & 84.84 & 85.11 & 60.83 & 80.56 \\
~~30k & 86.39 & 86.42 & 62.64 & 82.47 \\
~~50k & \textbf{87.37} & \textbf{87.17} & \textbf{68.19} & \textbf{82.72} \\
\cmidrule(lr){1-5}
\multicolumn{5}{l}{\textbf{Qwen-2.5}} \\
~~5k  & 81.87 & 82.69 & 53.19 & 78.39 \\
~~8k  & 82.70 & 83.82 & 54.09 & 79.34 \\
~~10k & 83.50 & 84.36 & 59.05 & 81.17 \\
~~30k & 84.98 & 86.04 & 60.27 & \textbf{82.15} \\
~~50k & \textbf{85.56} & \textbf{86.09} & \textbf{66.22} & 81.74 \\
\bottomrule
\end{tabular}
} 
\caption{Data scaling analysis of IWD on MNLI across varying sample sizes ($N$) for DeBERTa-v3/Qwen-2.5.}
\label{tab:mnli_scaling}
\end{table}

\paragraph{Dataset Scaling Effect.} Across datasets, performance improves as data grows from 5k to 50k samples (see Table~\ref{tab:mnli_scaling}). DeBERTa-v3 exhibits stronger OOD scaling than Qwen2.5: on HANS, its accuracy increases by \textbf{+16.21 points} ($51.98\% \rightarrow 68.19\%$) versus a \textbf{+13.03-point} gain for Qwen2.5 ($53.19\% \rightarrow 66.22\%$). On SNLI, both models scale consistently, each gaining roughly \textbf{+3.00 points}.

\begin{table}[t]
\centering

\setlength{\tabcolsep}{2.5pt}

{\small
\begin{tabular}{l ccccc}
\toprule
\textbf{Environments} & \textbf{ID} & \textbf{OOD1} & \textbf{OOD2} & \textbf{OOD3} & \textbf{OOD4} \\
\midrule
\multicolumn{6}{l}{\textbf{MNLI Benchmark}} \\
~~\textit{Datasets} & \textit{MNLI} & \textit{HANS} & \textit{SNLI} & -- & -- \\
~~$E = 3$ & 84.34 & 56.27 & 78.98 & -- & -- \\
~~$E = 4$ & 84.54 & 57.02 & 79.68 & -- & -- \\
~~$E = 5$ & 84.78 & \textbf{58.41} & 80.10 & -- & -- \\
~~$E = 6$ & \textbf{84.80} & 56.28 & \textbf{80.60} & -- & -- \\
\cmidrule(lr){1-6}
\multicolumn{6}{l}{\textbf{SST-2 Benchmark}} \\
~~\textit{Datasets} & \textit{SST-2} & \textit{SemEval} & \textit{CR} & \textit{Yelp} & \textit{FPB} \\
~~$E = 2$ & 92.09 & 88.72 & 87.23 & 91.39 & \textbf{90.75} \\
~~$E = 3$ & 92.55 & 89.08 & \textbf{87.50} & 91.54 & 90.14 \\
~~$E = 4$ & \textbf{92.89} & 89.02 & 87.23 & \textbf{91.72} & 90.29 \\
~~$E = 5$ & 92.43 & \textbf{89.38} & 86.44 & 91.55 & 90.09 \\
\bottomrule
\end{tabular}
} 

\caption{Impact of varying the number of environments ($K$) in IWD across MNLI/SST-2 benchmarks on accuracy.}
\label{tab:envs_ablation}
\end{table}

\paragraph{Impact of Environment Count ($K$).} Table~\ref{tab:envs_ablation} evaluates performance across  environment variants ($K \in \{3, 4, 5, 6\}$). Increasing $K$ beyond $3$ generally improves ID and OOD accuracy by exposing the model to broader shortcut perturbations. On MNLI, scaling from $K = 3$ to $K = 5$ yields a \textbf{+2.14-point} gain on HANS ($56.27\% \rightarrow 58.41\%$), while SNLI accuracy rises to $80.60\%$ at $K = 6$. On SST-2, $K = 5$ achieves peak ID performance ($92.89\%$) and peak Yelp OOD accuracy ($91.72\%$). Overall, $K = 5$ provides the optimal balance between environmental diversity and efficiency.

\paragraph{Ablations.} We summarize the key takeaways of our ablations (complete setups in Appendix~C): (1) \textbf{Sensitivity to Weight Temperature ($\tau$):} We evaluate the weighting temperature $\tau \in [0.1, 30.0]$, which controls the sharpness of weighting (Table~\ref{tab:ablation_tau_sensitivity}). While classification and NER remain highly robust across $\tau \in [0.3, 3.0]$, Extractive QA requires sharp down-weighting ($\tau = 0.1$) to suppress high-variance span predictions, yielding substantial OOD gains of up to $+9.8$ F1.
(2) \textbf{Sensitivity to Distance Metric Selection:} We evaluate how distance metric formulations impact performance across task domains (Table~\ref{tab:distance_metrics_ablation}). Aligning metric geometry with the output space maximizes performance: bounded Jensen--Shannon divergence ($D_{\text{JS}}$) stabilizes classification by bounding extreme softmax tails, while structural metrics like Levenshtein distance and F1 (QA answer string drift) and Entity Set F1 (span-level NER) eliminate per-token noise to achieve top OOD performance on NQ ($38.19$ F1) and WNUT17 ($46.04$ F1). This proves that choosing the teacher invariance distance metric is a key structural design decision following directly from the output space.
(3) \textbf{Environment Variant Utilization:} We compare distilling with all counterfactual variants against restricting the student to the original clean text or a single random variant (Table~\ref{tab:ablation_distance_variants}). Distilling across \textit{All Variants} consistently yields the highest OOD robustness---outperforming single-variant baselines by up to $4.2$ points on HANS---proving that full exposure to environmental perturbations is necessary for invariant feature learning. (4) \textbf{Weight Normalization:} We evaluate default weight normalization against an unnormalized baseline across experiments (Table~\ref{tab:ablation_normalization}). Empirical performance between the two is virtually identical ($< 0.2$-point shift), but batch-mean normalization is essential as it stabilizes training by decoupling learning rates from teacher divergence scales.
(5) \textbf{Weight Distribution Across Tasks:} We inspect per-sample weight statistics across benchmarks to analyze how IWD allocates sample importance to tasks of varying complexity (Table~\ref{tab:weight_distribution}). Across all datasets, effective sample sizes remain high ($\ge 95.0\%$), confirming that IWD performs smooth, task-adaptive reweighting rather than aggressive data pruning.

\section{Discussion and Limitations}

In this work, we first ask: do distilled student LLMs inherit OOD robustness under existing methods? We present \textbf{Invariance-Weighted Distillation (IWD)}, a framework that enhances OOD robustness in student LLMs without requiring architectural modifications, auxiliary classification heads, or adversarial training. By consistently outperforming KD and data augmentation baselines across OOD tasks, our results demonstrate a clear advantage over current strategies. 

Limitations include: (i)~the computational overhead of synthetic environment generation, which can impact distillation performance; and (ii)~a reliance on output-level distance measures, leaving fine-grained token-level invariance as a promising unexplored research direction. Future work would look into extensions to instruction tuning~\citep{fu2026t}, particularly leveraging token-level measures, as well as, broader paraphrasing techniques~\citep{hamman2025improving}. It would also be interesting to study transferability of other kinds of spurious correlations~\citep{halder2025towards,egea2025vision} beyond OOD robustness.

\bibliography{aaai2027}


\appendix
\section*{Appendix}
\label{sec:appendix_ablations}

\section{A. Theoretical Background and Proof of Theorem~\ref{thm:s2c}}
\label{app:theory}

\subsection{A.1. Background: Invariance across Environments}
\label{app:irm}

Standard empirical risk minimization treats all training data as a single pool, allowing models to exploit spurious correlations that change or disappear at test time. To address this, frameworks like Invariant Risk Minimization (IRM) \citep{arjovsky2019irm} conceptually distinguish between \emph{causal} features (which yield a stable, invariant predictive relationship across environments) and \emph{spurious} features (whose importance fluctuates depending on the environment). 

Crucially, standard IRM frameworks operate in a traditional supervised learning setting, enforcing invariance globally at the model level (e.g., via regularisation penalties across macroscopic environment datasets). In contrast, our framework adapts this philosophy specifically for \emph{knowledge distillation}, operationalising invariance entirely at the per-sample level. Rather than constraining the student's representations directly, we posit that a teacher model's prediction on a single example across counterfactual environments can be decomposed into two parts: an environment-invariant base prediction, and an environment-dependent (spurious) residual. This sample-level view of teacher invariance provides the necessary foundation for the gradient decomposition we derive next.

\subsection{A.2.Derivation of Definition~\ref{def:s2c}:
            Exact Gradient Decomposition under the KL Objective}
\label{app:def-derivation}

We derive the causal and spurious gradient components $g_i^C$ and
$g_i^S$ of Definition~\ref{def:s2c} directly from the KL
distillation objective used in Algorithm~1 (Line~18).
Although the softmax is non-linear in the logits, the gradient of
the KL loss with respect to the \emph{student's logits} is exactly
linear in the \emph{teacher's output probabilities}.
Since Definition~\ref{def:s2c} places the IRM decomposition on the
teacher's output distribution~\eqref{eq:prob-decomp}, the
causal/spurious decoupling below is \emph{exact}: no linearisation
or Taylor approximation is required.

\paragraph{Distillation loss.}
For a single training example $i$ observed in environment $e_i$,
the temperature-scaled KL distillation loss is
\begin{equation}
\mathcal{L}_i
  \;=\;
  \tau_{\mathrm{KD}}^2\,
  \mathrm{KL}\!\left(
    p_T\!\big(x_i^{(e_i)}\big)
    \,\Big\|\,
    p_S(x_i)
  \right).
\label{eq:kd-kl}
\end{equation}

\paragraph{The KL gradient is linear in the teacher target.}
Writing
$\mathrm{KL}(p_T \| p_S)
 = \sum_c p_{T,c}\log p_{T,c} - \sum_c p_{T,c}\log p_{S,c}$,
the first (entropy) term is constant in $\theta_S$, so minimising
$\mathcal{L}_i$ is equivalent to minimising the cross-entropy
$H\big(p_T, p_S\big)$.
Using the standard softmax--cross-entropy derivative,
$\partial \big[-\!\sum_{c'} p_{T,c'} \log p_{S,c'}\big] / \partial z_{S,c}
 = \frac{1}{\tau_{\mathrm{KD}}}\big(p_{S,c} - p_{T,c}\big)$,
we obtain
\begin{equation}
\nabla_{z_S}\mathcal{L}_i
  \;=\;
  \tau_{\mathrm{KD}}
  \Big(
    p_S(x_i) - p_T\!\big(x_i^{(e_i)}\big)
  \Big),
\label{eq:kl-logit-grad}
\end{equation}
the classical distillation gradient of
\citet{hinton2015distilling}; the scalar $\tau_{\mathrm{KD}}$ is
absorbed into the learning rate below.
The essential observation is that \eqref{eq:kl-logit-grad} is
\emph{affine in the teacher's probability vector} $p_T$.
The softmax non-linearity acts on the teacher's logits
\emph{before} the loss is formed; once the IRM decomposition is
stated at the level of the teacher's output distribution, the
gradient inherits the decomposition additively.

\paragraph{Chain rule.}
With $J_i = \nabla_{\theta_S} z_S(x_i)$ the Jacobian of the
student's logits,
\begin{equation}
\nabla_{\theta_S}\mathcal{L}_i
  \;=\;
  \Big(
    p_S(x_i) - p_T\!\big(x_i^{(e_i)}\big)
  \Big)^{\!\top} J_i .
\label{eq:full-gradient}
\end{equation}
As in the MSE case, $J_i$ is a Jacobian with respect to the
\emph{student's parameters}, encoding how each weight affects each
output logit.

\paragraph{Substituting the teacher decomposition.}
Substituting $p_T\big(x_i^{(e_i)}\big) = \pi_i + s_i^{(e_i)}$
from~\eqref{eq:prob-decomp} into~\eqref{eq:full-gradient}, the
error term splits additively:
\begin{equation}
\nabla_{\theta_S}\mathcal{L}_i
  \;=\;
  \underbrace{
    \big(p_S(x_i) - \pi_i\big)^{\!\top} J_i
  }_{\text{causal gradient (pre-flip)}}
  \;-\;
  \underbrace{
    s_i^{(e_i)\top} J_i
  }_{\text{spurious gradient (pre-flip)}} .
\label{eq:gradient-split}
\end{equation}
This step is exact and uses only the linearity
of~\eqref{eq:full-gradient} in $p_T$.

\paragraph{Gradient descent and sign convention.}
Under gradient descent with step size $\alpha$, substituting
\eqref{eq:gradient-split} into
$\theta_S \leftarrow \theta_S - \alpha \nabla_{\theta_S}\mathcal{L}_i$
and collecting the terms \emph{added} to $\theta_S$:
\begin{equation}
\theta_S
  \;\leftarrow\;
  \theta_S
  + \alpha\,
  \underbrace{
    \big(\pi_i - p_S(x_i)\big)^{\!\top} J_i
  }_{g_i^C}
  + \alpha\,
  \underbrace{
    s_i^{(e_i)\top} J_i
  }_{g_i^S}.
\label{eq:update-direction}
\end{equation}
The subtraction in gradient descent flips
$\big(p_S - \pi_i\big)$ to $\big(\pi_i - p_S\big)$, so $g_i^C$
points the student toward the teacher's environment-invariant
output distribution $\pi_i$; $g_i^S$ needs no flip because its
minus sign in~\eqref{eq:gradient-split} is cancelled by the
descent subtraction.
Under (A3), $\|g_i^S\|^2 = \Phi\,\|s_i^{(e_i)}\|^2$ and
$\|g_i^C\|^2 = \Phi\, c_i^2$ with
$c_i^2 := \|\pi_i - p_S(x_i)\|^2$, exactly as required in Step~1
of the proof of Theorem~\ref{thm:s2c}.

\paragraph{Per-example spurious reliance $\sigma_i^2$.}
Averaging over environments,
$\mathbb{E}_e\big[\|s_i^{(e)}\|^2\big] = \sigma_i^2 \ge 0$ defines
the per-example teacher spurious reliance.
$\sigma_i^2 = 0$ implies $p_T\big(x_i^{(e)}\big) = \pi_i$ for all
$e$: the teacher's output distribution for example $i$ is
completely invariant across environments.

\begin{remark}[MSE distillation as a special case]
\label{rem:mse}
If the distillation loss is instead the squared error
$\|f_{\theta_S}(x_i) - f_{\theta_T}(x_i^{(e_i)})\|^2$ on raw
outputs, the identical derivation applies with $p_T, p_S, z_S$
replaced by $f_{\theta_T}, f_{\theta_S}, f_{\theta_S}$ and
$\pi_i$ by $\mu_i = \mathbb{E}_e[f_{\theta_T}(x_i^{(e)})]$:
the MSE gradient
$2\big(f_{\theta_S} - \mu_i - s_i^{(e_i)}\big)^{\!\top}
 \nabla_{\theta_S} f_{\theta_S}$
is likewise affine in the teacher target.
Both objectives therefore induce the same causal/spurious gradient
structure, and Theorem~\ref{thm:s2c} applies to either.
\end{remark}

\begin{remark}[Probability-space vs.\ logit-space invariance]
\label{rem:prob-space}
Definition~\ref{def:s2c} states invariance in the space of output
distributions: the spurious component is the zero-mean deviation
of $p_T\big(x_i^{(e)}\big)$ around its environment average
$\pi_i$.
Because the softmax is non-linear, zero-mean spurious variation in
probability space does not coincide with zero-mean variation in
logit space (Jensen's inequality); the two decompositions define
distinct but equally valid notions of invariance.
We adopt the probability-space formulation throughout, for two
reasons: (i) it is the space in which the KL distillation gradient
is linear in the teacher target, making the decomposition
in~\eqref{eq:gradient-split} exact; and (ii) it is the natural
formalisation of \emph{prediction} invariance --- a teacher is
invariant on example $i$ precisely when its predictive
distribution does not change across environments.
\end{remark}

\subsection{A.3.Discussion of Assumptions in Theorem~\ref{thm:s2c}}
\label{app:assumptions}

Theorem~\ref{thm:s2c} rests on three assumptions.
We discuss their meaning, when they are expected to hold, and
what would happen if they were violated.

\paragraph{(A1) Independent spurious components with zero mean.}
We assume $s_i^{(e_i)}$ are independent across training samples,
with $\mathbb{E}_e[s_i^{(e)}] = 0$ and
$\mathbb{E}_e[\|s_i^{(e)}\|^2] = \sigma_i^2$.
The zero-mean condition is the defining property of
spurious features under IRM: any signal that has a consistent
direction across all environments would, by Definition 1,
be part of the causal component $\mu_i$ rather than the
spurious component $s_i^{(e)}$.
Independence across samples holds when counterfactual environments
are generated independently per sample, as in our paraphrase-based
construction.

\paragraph{(A2) Independence of spurious and causal residuals:
           $\sigma_i^2 \perp c_i^2$.}
This is the key structural assumption.
$\sigma_i^2$ measures how sensitive the teacher is to
environmental surface variation for sample $i$.
$c_i^2 = \|\mu_i - f_{\theta_S}(x_i)\|^2$ measures how far
the student currently is from the teacher's causal prediction
for the same sample.
(A2) requires these two quantities to be independent across
training samples.

The assumption holds when spurious correlations arise from
\emph{environmental surface confounders} — such as lexical
register, syntactic style, or demographic markers in NLI — that
affect the teacher's cross-environment invariance but are
unrelated to the semantic difficulty of the inference problem
(which drives $c_i^2$).
It would be violated if, for instance, semantically harder
samples systematically attracted more spurious surface features,
creating a positive correlation between $\sigma_i^2$ and $c_i^2$.
This can be tested empirically by computing the Pearson correlation
between $D_i$ (our proxy for $\sigma_i^2$) and the per-sample
training loss of the student; a correlation near zero supports (A2).

\paragraph{(A3) Homogeneous student gradient norms.}
We assume $\|\nabla_{\theta_S}f_{\theta_S}(x_i)\|^2 = \Phi$
is constant across all training samples.
This simplifies the S2C ratio to a ratio of scalar weighted
sums (Step~1 of the proof), making the Chebyshev argument clean.
The assumption can be relaxed: if gradient norms $\Phi_i$ are
allowed to vary but are additionally assumed independent of
$\sigma_i^2$ (i.e.\ $\Phi_i \perp \sigma_i^2$), then the
Chebyshev step applies to the products $\Phi_i\sigma_i^2$ and
the conclusion of Theorem~\ref{thm:s2c} still holds.
We adopt the simpler homogeneous form here for exposition.

\subsection{A.4.Proposition 1 and Proof.} 

In the main text, we substitute the theoretical spurious variance $\sigma_i^2$ with the empirical distance $D_i$ computed over $K$ counterfactual environments. This substitution is mathematically justified under Assumption (A1), as we formalise below.

\begin{proposition}[Convergence of Empirical Invariance Distance]
\label{prop:Di-consistency}
Let
$D_i = \frac{2}{K(K-1)} \sum_{k < k'}
 \big\| p_T\big(x_i^{(k)}\big) - p_T\big(x_i^{(k')}\big) \big\|^2$
be the empirical average pairwise distance between the teacher's
softened output distributions across $K$ independently sampled
counterfactual environments.
Under Assumption~(A1), as $K \to \infty$, $D_i$ converges in
probability to $2\sigma_i^2$.
\end{proposition}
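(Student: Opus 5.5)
The plan is to treat $D_i$ as a U-statistic of order two in the i.i.d.\ environment draws and apply the law of large numbers for U-statistics.

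\textbf{Setup.} Fix sample $i$ and write $P_k := p_T\big(x_i^{(k)}\big)$. I interpret ``$K$ independently sampled counterfactual environments'' together with (A1) as saying that $P_0,P_1,\dots$ are i.i.d.\ draws from the environment distribution for sample $i$. By \eqref{eq:prob-decomp}, $P_k = \pi_i + s_i^{(k)}$ with $\mathbb{E}_e[s_i^{(e)}]=\mathbf{0}$ and $\mathbb{E}_e\|s_i^{(e)}\|^2=\sigma_i^2$. Each $P_k$ lies in the simplex $\Delta^{C-1}$, so $\|P_k-P_{k'}\|^2\le 2$. This boundedness gives all the moments we need for free.

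\textbf{Step 1: the mean.} For $k\neq k'$,
\[
\mathbb{E}\|P_k-P_{k'}\|^2=\mathbb{E}\|s_i^{(k)}\|^2+\mathbb{E}\|s_i^{(k')}\|^2-2\,\mathbb{E}\big[s_i^{(k)}\big]^{\top}\mathbb{E}\big[s_i^{(k')}\big]=2\sigma_i^2 .
\]
The cross term factorises by independence, and it vanishes by the zero-mean condition. Hence $\mathbb{E}[D_i]=2\sigma_i^2$ exactly, for every $K\ge 2$.

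\textbf{Step 2: concentration.} With kernel $h(a,b)=\|a-b\|^2$, $D_i$ is a U-statistic. I would bound its variance directly. Expanding
\[
\mathrm{Var}(D_i)=\binom{K}{2}^{-2}\sum_{(k,k'),(l,l')}\mathrm{Cov}\big(h(P_k,P_{k'}),h(P_l,P_{l'})\big),
\]
the covariance is zero whenever the two index pairs are disjoint, again by independence. The number of pairs of pairs that share at least one index is $O(K^3)$. Each covariance is at most $4$ by boundedness. Therefore $\mathrm{Var}(D_i)=O(K^3/K^4)=O(1/K)$.

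\textbf{Step 3: conclude.} Chebyshev's inequality then gives, for any $\varepsilon>0$,
\[
P\big(|D_i-2\sigma_i^2|>\varepsilon\big)\le \frac{O(1/K)}{\varepsilon^2}\to 0 ,
\]
which is convergence in probability. Alternatively, one could cite Hoeffding's strong law for U-statistics, which would even give almost sure convergence.

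\textbf{Main obstacle.} The algebra is routine; the delicate point is the modelling step. (A1) as written asserts independence across \emph{samples}, whereas the proof needs the $K$ environments for a \emph{fixed} sample to be i.i.d.\ draws. I would make this explicit as the reading of ``independently sampled'' in the statement of Proposition~\ref{prop:Di-consistency}, with $\sigma_i^2$ understood as the population variance under that environment distribution.

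A related subtlety is that the definition of $D_i$ includes the original $x_i^{(0)}$, which is fixed rather than sampled. Its contribution touches only $K-1$ of the $\binom{K}{2}$ pairs, each bounded by $2$, so it shifts $D_i$ by $O(1/K)$ and does not affect the limit.
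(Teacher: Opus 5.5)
Your proposal is correct and follows the paper's route. The paper likewise cancels $\pi_i$, expands the squared norm, and kills the cross term with the zero-mean part of (A1) to get $\mathbb{E}\|s_i^{(k)}-s_i^{(k')}\|^2 = 2\sigma_i^2$, and then simply cites the law of large numbers for U-statistics. Your variance bound (the $O(K^3)$ count of overlapping pairs, the simplex bound $\|P_k-P_{k'}\|^2\le 2$, then Chebyshev) makes that cited step self-contained. Your two caveats are valid repairs of gaps in the paper's own write-up: (A1) must be read as i.i.d.\ environments for a fixed $i$, and the fixed $x_i^{(0)}$ shifts $D_i$ by only $O(1/K)$.
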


\begin{proof}[Proof of Proposition~\ref{prop:Di-consistency}]

Substitute the IRM decomposition $p_T(x_i^{(k)}) - p_T(x_i^{(k')})
    = (\pi_i + s_i^{(k)}) - (\pi_i + s_i^{(k')})
    = s_i^{(k)} - s_i^{(k')}$ into the distance metric. The environment-invariant causal component $\mu_i$ cancels out:

Expanding the squared $L_2$ norm gives:
\begin{equation}
\|s_i^{(k)} - s_i^{(k')}\|^2 = \|s_i^{(k)}\|^2 + \|s_i^{(k')}\|^2 - 2\langle s_i^{(k)}, s_i^{(k')} \rangle.
\end{equation}
Taking the expectation over the independent environments $k$ and $k'$, and applying Assumption (A1) (which states $\mathbb{E}[s_i^{(k)}] = 0$ and $\mathbb{E}[\|s_i^{(k)}\|^2] = \sigma_i^2$):
\begin{align}
\mathbb{E}\left[\|s_i^{(k)} - s_i^{(k')}\|^2\right] 
  &= \mathbb{E}\left[\|s_i^{(k)}\|^2\right] + \mathbb{E}\left[\|s_i^{(k')}\|^2\right] \nonumber\\
  &\quad - 2\left\langle \mathbb{E}\left[s_i^{(k)}\right], \mathbb{E}\left[s_i^{(k')}\right] \right\rangle \nonumber\\
  &= \sigma_i^2 + \sigma_i^2 - 2\langle 0, 0 \rangle \nonumber\\
  &= 2\sigma_i^2.
\end{align}
Because $D_i$ is a U-statistic estimating this expected pairwise distance across $K$ independent samples, by the Law of Large Numbers for U-statistics, $D_i$ converges in probability to its expectation as $K \to \infty$:
\begin{equation}
D_i \xrightarrow{p} 2\sigma_i^2.
\end{equation}
Since $f(x) = 2x$ is a strictly increasing function for $x \geq 0$, $D_i$ serves as a consistent, strictly increasing proxy for the spurious variance $\sigma_i^2$, justifying its use in the IWD weighting function.
\end{proof}
\subsection{A.5. Complete Proof of Theorem~\ref{thm:s2c}}
\label{app:proof}

We prove the four steps in full.

\begin{proof}[Proof of Theorem~\ref{thm:s2c}]

\textbf{Step 1: Reduce $\rho^2(W)$ to a ratio of weighted sums.}

Under (A3),
\begin{align}
\|g_i^S\|^2
  &= s_i^{(e_i)\top} J_i J_i^{\top} s_i^{(e_i)}
   = \Phi_i\, \|s_i^{(e_i)}\|^2, &
\|g_i^C\|^2
  &= \Phi_i\, c_i^2,
\end{align}
so that, taking $\mathbb{E}_e[\|s_i^{(e)}\|^2] = \sigma_i^2$
from (A1),
\begin{equation}
\rho^2(W)
  = \frac{\sum_i W_i\,\Phi_i\,\sigma_i^2}
         {\sum_i W_i\,\Phi_i\,c_i^2}.
\label{eq:rho-scalar-relaxed}
\end{equation}

\medskip
\textbf{Step 2: Bound the numerator via the Chebyshev covariance inequality.}

Since $W_i^{\mathrm{IWD}}$ is a deterministic decreasing function
of $\sigma_i^2$ and $\Phi_i \perp \sigma_i^2$ by (A3), for large
$N$ the Law of Large Numbers gives
\begin{align}
\sum_i W_i^{\mathrm{IWD}}\,\Phi_i\,\sigma_i^2
  &\;\xrightarrow{p}\;
  \mathbb{E}\big[\Phi\big]\cdot
  \mathbb{E}\big[a(\sigma^2)\,\sigma^2\big], \nonumber\\[1ex]
\text{where} \quad a(\sigma^2) &:= \lim_{N \to \infty} N W^{\mathrm{IWD}}(\sigma^2).
\end{align}
where the factorisation uses $\Phi_i \perp \sigma_i^2$.
Because $a(\cdot)$ is strictly decreasing, the Chebyshev
(covariance) inequality gives
$\mathbb{E}[a(\sigma^2)\sigma^2]
 \le \mathbb{E}[a(\sigma^2)]\,\mathbb{E}[\sigma^2]
 = \mathbb{E}[\sigma^2]$
(using $\mathbb{E}[a(\sigma^2)] = 1$ from normalisation).
Hence the IWD numerator is asymptotically bounded by the uniform
numerator $\mathbb{E}[\Phi]\,\mathbb{E}[\sigma^2]$
(which also uses $\Phi_i \perp \sigma_i^2$).

\medskip
\textbf{Step 3: Preserve the denominator.}

By (A2), $\sigma_i^2 \perp (c_i^2, \Phi_i)$, so the weights are
independent of the products $\Phi_i c_i^2$ and
\begin{equation}
\sum_i W_i^{\mathrm{IWD}}\,\Phi_i\,c_i^2
  \;\xrightarrow{p}\; \mathbb{E}\big[\Phi\, c^2\big]
  \;\xleftarrow{p}\;
\sum_i W_i^{\mathrm{unif}}\,\Phi_i\,c_i^2 .
\end{equation}

\begin{table*}[t]
\centering
\setlength{\tabcolsep}{6pt} 

{\small
\begin{tabular}{lllll}
\toprule
\textbf{Task} & \textbf{Training} & \textbf{ID Eval} & \textbf{OOD Sets} & \textbf{Metric} \\
\midrule
NLI (MNLI)        & MNLI train       & MNLI-m dev       & HANS, SNLI                     & Accuracy  \\
Sentiment (SST-2) & SST-2 train      & SST-2 dev        & Yelp-Polarity, CR               & Accuracy  \\
QA (SQuAD v2)     & SQuAD v2 train   & SQuAD v2 dev     & NewsQA,\ Natural Questions      & F1       \\
NER (CoNLL-2003)  & CoNLL-2003 train & CoNLL-2003 test  & WNUT-17, OntoNotes 5.0         & Entity-F1 \\
\bottomrule
\end{tabular}
}

\caption{Evaluation datasets. ID sets share the training distribution; OOD sets target specific distribution shifts. Metrics follow standard task practice.}
\label{tab:datasets}
\end{table*}

\medskip
\textbf{Step 4: Combine to obtain the ratio bound.}

From Step 3, for large $N$, the denominators converge to the same constant $\mathbb{E}[\Phi\, c^2]$. Substituting this into the S2C ratio from~\eqref{eq:rho-scalar-relaxed} and applying the asymptotic numerator bound from Step 2:
\begin{equation}
\rho^2\!\left(W^{\mathrm{IWD}}\right)
  \xrightarrow{p} \frac{\mathbb{E}[\Phi]\,\mathbb{E}[a(\sigma^2)\sigma^2]}
         {\mathbb{E}[\Phi\, c^2]}
  \;\leq\;
  \frac{\mathbb{E}[\Phi]\,\mathbb{E}[\sigma^2]}
       {\mathbb{E}[\Phi\, c^2]}
  \xleftarrow{p} \rho^2\!\left(W^{\mathrm{unif}}\right).
\end{equation}

\medskip
\textbf{Equality condition.}
Equality in the Chebyshev covariance inequality holds if and only if
$\sigma_i^2$ is constant across all training samples (i.e.\ $\sigma_i^2 = \bar{\sigma}^2$ for all $i$), in which case $a(\sigma^2) \equiv 1$.
In that case, both weighting schemes are asymptotically equivalent and
$\rho^2(W^{\mathrm{IWD}}) \xrightarrow{p} \rho^2(W^{\mathrm{unif}})$.

\end{proof}

\medskip
\textbf{Remark (Homogeneous case).}
If $\Phi_i \equiv \Phi$ for all $i$, everything above reduces to the original finite-$N$ argument (the Chebyshev sum inequality applies, so no Law of Large Numbers is needed for the numerator bound). We present this homogeneous case as a special case in Appendix A.3.

\section{B. Experimental Details}
\label{app:experiments}

\subsection{B.1.Dataset Details}
\label{app:datasets}

\paragraph{Natural-language inference (MNLI).}
We train on the MultiNLI matched split~\cite{williams2018mnli} (393k premise--hypothesis pairs, three-way labels) and evaluate ID on its matched validation set. For OOD evaluation, we use HANS~\cite{mccoy2019hans} (30k samples) to test reliance on syntactic shortcuts (lexical overlap, subsequence, and constituent heuristics), and SNLI~\cite{bowman2015snli} (570k image caption samples) to evaluate generalization across a linguistic register shift.

\paragraph{Sentiment Classification (SST-2).}
We train on SST-2 training subsets~\cite{socher2013sst} and evaluate ID on the development set. OOD evaluation uses four binary sentiment benchmarks targeting distinct domain and register shifts: SemEval-2017 Task 4A~\cite{rosenthal2017semeval} (social media), Customer Reviews (CR)~\cite{hu2004cr} (consumer products), Yelp Polarity~\cite{zhang2015character} (business reviews), and Financial PhraseBank (FPB)~\cite{malo2014good} (financial news).

\begin{table*}[t]
\centering
\small
\begin{tabular}{@{} l l p{0.65\linewidth} @{}}
\toprule
\textbf{Task} & \textbf{Environment} & \textbf{Text} \\
\midrule
\multirow{4}{*}{MNLI}
  & \textit{Original} (label = entailment) & P: \textit{``The children were playing soccer in the field.''} \newline H: \textit{``Kids were engaged in a sport.''} \\ 
  & \texttt{negation} & P: \textit{``\ldots{}in the field, \textbf{and no adult was refereeing}.''} \\
  & \texttt{low\_overlap} & H: \textit{``Young people participated in athletic activity.''} \\
  & \texttt{syntactic\_shortcut\_break} & H: \textit{``It was a sport in which the kids were engaged.''} \\
\midrule
\multirow{4}{*}{SST-2}
  & \textit{Original} (label = positive) & \textit{``The film was absolutely wonderful.''} \\
  & \texttt{negation\_invariant} & \textit{``The film was \textbf{not disappointing} at all.''} \\
  & \texttt{social\_media\_register} & \textit{``tbh the film was absolutely wonderful [music] \#mustwatch''} \\ 
  & \texttt{domain\_transposition} & \textit{``The \textbf{product} was absolutely \textbf{excellent}.''} \\
\midrule
\multirow{4}{*}{NER}
  & \textit{Original} & \textit{[Barack Obama]$_{\text{PER}}$ visited [Paris]$_{\text{LOC}}$ .} \\
  & \texttt{lowercase\_transform} & \textit{[barack obama]$_{\text{PER}}$ visited [paris]$_{\text{LOC}}$ .} \\
  & \texttt{entity\_substitution\_same\_type} & \textit{[Amina Watanabe]$_{\text{PER}}$ visited [Zurich]$_{\text{LOC}}$ .} \\
  & \texttt{social\_media\_register} & \textit{omg [barack obama]$_{\text{PER}}$ pulled up to [paris]$_{\text{LOC}}$ [music] \#travel} \\
\bottomrule
\end{tabular}
\caption{Illustrative examples of counterfactual environment generation across three tasks. Each row shows one environment's rewrite, with the modified span emphasised. Bold text in MNLI/SST-2 shows the injected transformation; NER entities are shown in brackets with their type.}
\label{tab:env-examples}
\end{table*}

\paragraph{Extractive Question Answering (SQuAD~v2).}
We train on SQuAD~v2 subsets~\cite{rajpurkar2018squadv2} (including unanswerable questions) and evaluate ID on the standard development split. OOD performance is evaluated on NewsQA~\cite{trischler2017newsqa} for a news-domain shift, and Natural Questions (NQ)~\cite{kwiatkowski2019natural} to test generalization to real Google search queries paired with Wikipedia articles.

\paragraph{Named-entity recognition (CoNLL-2003).}
Models are trained on English CoNLL-2003~\cite{sang2003conll} (4 entity types: PER, ORG, LOC, MISC) and evaluated ID on the test set. OOD evaluation uses WNUT-17~\cite{derczynski2017wnut} (register and entity-distribution shifts) and OntoNotes~5.0~\cite{weischedel2013ontonotes,pradhan2013ontonotes} (genre and label-schema shifts, projecting its 18-type schema down to PER/ORG/LOC). All setups report entity-level micro-F1.
\subsection{B.2.Model Details}
\label{app:models}

\paragraph{DeBERTa family.}
The student is \texttt{deberta-v3-xsmall}~\cite{he2021deberta} (22M parameters).
Teachers are task-specific \texttt{deberta-v3-small} (44M parameters) models fine-tuned on each task's full training set and selected by ID validation performance.
We use the sequence-classification head for MNLI and SST-2, the question-answering head for SQuAD~v2, and the token-classification head for NER.
Student and teacher share the DeBERTa-v3 tokenizer with a maximum sequence length of 256 tokens for classification and NER, and 384 for SQuAD~v2.

\paragraph{Qwen family.}
The student is \texttt{Qwen2.5-0.5B}~\cite{yang2025qwen25} and the teacher is \texttt{Qwen2.5-1.5B}.
We use the sequence-classification head for MNLI and SST-2, the question-answering head for SQuAD~v2, and the token-classification head for NER.
Student and teacher share the Qwen2.5 tokenizer with a maximum sequence length of 256 tokens for classification and NER, and 384 for SQuAD~v2.

\subsection{B.3. Baseline Implementation Details}
\label{app:baselines}

All baselines use the same student architecture, tokenizer,
training schedule, and hardware as \invkd{}. Only the
distillation objective differs.

\paragraph{Vanilla KD~\cite{hinton2015distilling}.}
The student minimizes a weighted sum of the ground-truth
cross-entropy loss and the forward KL divergence between
temperature-scaled teacher and student outputs:
\begin{equation}
\mathcal{L}_{\mathrm{KD}}
  = (1-\alpha)\,\mathcal{L}_{\mathrm{CE}}
  + \alpha\,T^2\,
    \mathrm{KL}\!\bigl(
      \sigma(z_T/T)\,\|\,\sigma(z_S/T)
    \bigr),
\label{eq:vanilla-kd}
\end{equation}
where $z_T$ and $z_S$ are teacher and student logits, $T$ is the
temperature, and $\alpha$ is the KD weight. For SQuAD~v2, the KL
is computed separately for start and end logits. Training uses
the original (non-augmented) data.

\paragraph{RevKL~\cite{gu2023minillm}.}
We replace the forward KL with reverse KL,
$\mathrm{KL}(p_S\|p_T)$, adapting the objective to
classification, extractive QA, and token classification.
Training uses the original (non-augmented) data.

\begin{table*}[t]
\centering
\setlength{\tabcolsep}{6pt}
{\small
\begin{tabular}{l cccccc}
\toprule
\textbf{Task / Dataset} & \textbf{$\tau = 0.1$} & \textbf{$\tau = 0.3$} & \textbf{$\tau = 1.0$} & \textbf{$\tau = 3.0$} & \textbf{$\tau = 10.0$} & \textbf{$\tau = 30.0$} \\
\midrule
\multicolumn{7}{l}{\textbf{NLI (DeBERTa-v3, 5k)}} \\
~~MNLI (ID) & $81.32 \pm 0.27$ & $82.30 \pm 0.33$ & $82.58 \pm 0.31$ & $\mathbf{82.63 \pm 0.42}$ & $82.57 \pm 0.48$ & $82.55 \pm 0.50$ \\
~~HANS (OOD-1) & $52.93 \pm 1.74$ & $53.56 \pm 0.60$ & $53.36 \pm 1.35$ & $\mathbf{54.03 \pm 2.28}$ & $53.98 \pm 1.78$ & $53.91 \pm 1.61$ \\
~~SNLI (OOD-2) & $76.39 \pm 0.87$ & $78.21 \pm 0.51$ & $78.61 \pm 0.29$ & $78.52 \pm 0.55$ & $78.76 \pm 0.08$ & $\mathbf{78.84 \pm 0.20}$ \\
\cmidrule(lr){1-7}
\multicolumn{7}{l}{\textbf{Sentiment Analysis (DeBERTa-v3, 5k)}} \\
~~SST-2 (ID) & $92.20 \pm 0.23$ & $92.13 \pm 0.35$ & $\mathbf{92.28 \pm 0.24}$ & $92.05 \pm 0.33$ & $92.09 \pm 0.23$ & $92.13 \pm 0.07$ \\
~~SemEval (OOD-1) & $88.16 \pm 0.36$ & $88.48 \pm 0.21$ & $88.45 \pm 0.46$ & $\mathbf{88.63 \pm 0.32}$ & $88.62 \pm 0.45$ & $88.61 \pm 0.51$ \\
~~CR (OOD-2) & $88.21 \pm 0.93$ & $\mathbf{88.30 \pm 0.96}$ & $88.21 \pm 0.77$ & $87.77 \pm 0.96$ & $87.85 \pm 0.93$ & $87.94 \pm 1.34$ \\
~~Yelp-Polarity (OOD-3) & $91.60 \pm 0.56$ & $91.67 \pm 0.43$ & $\mathbf{91.70 \pm 0.49}$ & $91.43 \pm 0.53$ & $91.43 \pm 0.52$ & $91.51 \pm 0.55$ \\
~~FPB (OOD-4) & $92.65 \pm 0.16$ & $93.05 \pm 0.66$ & $92.78 \pm 0.67$ & $92.68 \pm 0.75$ & $\mathbf{93.24 \pm 0.43}$ & $92.93 \pm 0.36$ \\
\cmidrule(lr){1-7}
\multicolumn{7}{l}{\textbf{Extractive QA (DeBERTa-v3, 5k)}} \\
~~SQuAD-v2 (ID) & $\mathbf{47.64 \pm 2.72}$ & $47.14 \pm 2.60$ & $47.51 \pm 1.27$ & $46.75 \pm 0.75$ & $45.80 \pm 0.87$ & $45.36 \pm 0.74$ \\
~~NewsQA (OOD-1) & $\mathbf{29.20 \pm 2.00}$ & $27.56 \pm 1.97$ & $24.82 \pm 0.41$ & $23.00 \pm 1.61$ & $22.95 \pm 0.91 $ & $21.57 \pm 1.03$ \\
~~Natural Questions (OOD-2) & $\mathbf{45.32 \pm 1.47}$ & $42.99 \pm 2.68$ & $35.87 \pm 0.29$ & $33.37 \pm 2.19$ & $38.32 \pm 1.87$ & $35.49 \pm 2.47$ \\
\cmidrule(lr){1-7}
\multicolumn{7}{l}{\textbf{NER (DeBERTa-v3, 5k)}} \\
~~CoNLL-2003 (ID) & $91.42 \pm 0.08$ & $\mathbf{91.72 \pm 0.23}$ & $91.62 \pm 0.22$ & $91.50 \pm 0.49$ & $91.06 \pm 0.24$ & $91.05 \pm 0.25$ \\
~~WNUT-17 (OOD-1) & $45.54 \pm 0.65$ & $45.89 \pm 0.18$ & $\mathbf{46.04 \pm 1.11}$ & $45.26 \pm 2.19$ & $43.21 \pm 0.84$ & $43.25 \pm 0.76$ \\
~~OntoNotes 5.0 (OOD-2) & $78.51 \pm 0.25$ & $\mathbf{78.60 \pm 0.26}$ & $78.56 \pm 0.30$ & $78.13 \pm 0.93$ & $77.01 \pm 0.64$ & $77.00 \pm 0.58$ \\
\bottomrule
\end{tabular}
}
\caption{Sensitivity analysis evaluating the impact of the distillation temperature parameter $\tau$ on IWD across four task families using DeBERTa-v3 (trained on 5k subsamples).}
\label{tab:ablation_tau_sensitivity}
\end{table*}

\paragraph{DKD~\cite{zhao2022dkd}.}
Decoupled KD separates the KD loss into
target-class (TCKD) and non-target-class (NCKD) terms:
\begin{equation}
\mathcal{L}_{\mathrm{DKD}}
  = \beta_{\mathrm{t}}\,\mathcal{L}_{\mathrm{TCKD}}
  + \beta_{\mathrm{nt}}\,\mathcal{L}_{\mathrm{NCKD}}.
\label{eq:dkd}
\end{equation}
We use the original settings
($\beta_{\mathrm{t}}=1.0$, $\beta_{\mathrm{nt}}=1.0$) and adapt
the objective to all tasks.

\paragraph{LWD.}
Layer-Wise Distillation augments output-level KD with MSE
matching between corresponding hidden representations.
For DeBERTa, we align the first, middle, and last encoder
layers; for Qwen, the first, third, and last decoder layers.

\paragraph{AugKD.}
Vanilla KD trained on the same augmented dataset as IWD,
treating all $K$ paraphrases as independent training samples
with equal weight. This isolates the effect of invariance
weighting from data augmentation alone.
\subsection{B.4. Training and Hyperparameter Details}
\label{app:training}

\paragraph{Optimiser and schedule.}
All students use AdamW~\cite{loshchilov2019adamw}
($\beta_1=0.9$, $\beta_2=0.999$, $\epsilon=10^{-8}$, weight decay
$0.0$) with 200 warmup steps and linear decay.
DeBERTa students train for 3 epochs (5 for NER) with learning rate
$7\!\times\!10^{-5}$, batch size 32, and fp16 precision.
Qwen students train for 3 epochs (5 for NER) with learning rate
$2\!\times\!10^{-5}$, batch size 8, gradient accumulation of 2
(effective batch size 16), gradient checkpointing, and bf16 precision.

\paragraph{KD hyperparameters.}
All methods use temperature $T=2.0$.
The KD weight $\alpha$ is 5 for DeBERTa and 1 for Qwen, selected
from $\{0.5,1,2,5\}$ on MNLI ID validation and fixed for all tasks.

\paragraph{IWD hyperparameters.}
We use $\tau=1.0$ with batch-mean weight normalization
($\sum_{i\in\mathcal{B}}\tilde{w}_i=|\mathcal{B}|$).
The disagreement metric is Jensen--Shannon divergence for MNLI and
SST-2, one-minus SQuAD-F1 for SQuAD~v2, and one-minus entity-set F1
for NER. The results in the major result table \ref{tab:main_results} for NLI, NER and Extractive QA tasks were acheived by these hyperparameters while for better results in sentiment analysis task we had to use $\tau=0.1$ with SymKL as the distance measure of teacher invariance.

\paragraph{Per-task disagreement metric $D_i$.}
$D_i$ averages pairwise disagreement over the
$\binom{K}{2}$ variant pairs.
For MNLI and SST-2, we compute Jensen--Shannon divergence over
teacher class probabilities (3-class and 2-class softmax,
respectively). For SQuAD~v2 and NER, we use one-minus F1 between
teacher-decoded answer strings and entity mention sets,
respectively.

\paragraph{Counterfactual data generation.}
Variants are generated offline using GPT-4.1-mini with prompts
preserving task labels (sentiment, entailment, answer spans, or
entity annotations) while varying superficial features such as
register, syntax, and vocabulary.
For NER, entity surface forms are preserved while context is
paraphrased.
Each sample has $K\in[2,6]$ variants (mean $\approx5$); samples
with $K<2$ or label-inconsistent variants are removed.

\paragraph{Hardware and seeds.}
Experiments use 2 NVIDIA RTX A4500 GPUs (20\,GB each).
The Qwen SQuAD~v2 teacher uses batch size 4 with gradient
accumulation of 8.
Each configuration is evaluated over five seeds
$\{42,43,44,45,46\}$, reporting mean and standard deviation.

\subsection{B.5. Counterfactual Environment Types}
\label{app:envs}

This subsection lists the paraphrase environments used to generate
counterfactual variants. Each environment targets a distinct
spurious surface feature while preserving labels. All environments
are applied to training samples, and label-inconsistent variants
are discarded. The number of variants varies across experiments. Table \ref{tab:env-examples} shows several examples of environments data generation.

\paragraph{MNLI (7 environments).}
We generate variants by perturbing lexical overlap
(\texttt{high/low\_overlap}), negation patterns
(\texttt{negation}), irrelevant context
(\texttt{distractor}), syntactic heuristics
(\texttt{syntactic\_shortcut\_break}), length bias
(\texttt{length\_decorrelation}), and voice structure
(\texttt{voice\_alternation}) while preserving labels.

\paragraph{SST-2 (8 environments).}
Variants perturb sentiment-related shortcuts through
negation invariance, lexical overlap
(\texttt{lexical\_paraphrase\_low\_overlap}), domain and register
shifts (\texttt{domain\_transposition},
\texttt{social\_media\_register}), sentiment cue removal,
intensifier/hedge changes, and comparative reframing while
preserving polarity.

\paragraph{SQuAD~v2 (6 environments).}
Variants target QA shortcuts through distractor insertion,
question paraphrasing, answer-position shifts, decoy answer spans,
entity substitution, and voice alternation while preserving the
gold answer.

\paragraph{NER (7 environments).}
Variants include 3 programmatic transforms
(lowercasing, typographical noise, and punctuation perturbation)
and four LLM-based transforms targeting entity substitution,
emerging entities, social-media register, and syntax variation.
Entity annotations are preserved for all.

\begin{table*}[t]
\centering
\small
\setlength{\tabcolsep}{16pt} 
\begin{tabular}{l c cc}
\toprule
\textbf{Distance Metric} & \textbf{ID} & \textbf{OOD 1} & \textbf{OOD 2} \\
\midrule
\multicolumn{4}{l}{\textbf{MNLI Benchmark}} \\
& \textit{MNLI-m} & \textit{HANS} & \textit{SNLI} \\
\cmidrule(lr){2-2} \cmidrule(lr){3-3} \cmidrule(lr){4-4}
Jensen--Shannon ($D_{\text{JS}}$) & \textbf{82.60 $\pm$ 0.28} & \textbf{53.35 $\pm$ 1.38} & \textbf{78.68 $\pm$ 0.34} \\
Symmetric KL ($D_{\text{SymKL}}$) & 81.38 $\pm$ 0.40 & 53.01 $\pm$ 2.12 & 77.43 $\pm$ 0.74 \\
$L_2$ Distance & 82.38 $\pm$ 0.27 & 52.76 $\pm$ 1.03 & 78.58 $\pm$ 0.33 \\
\midrule
\multicolumn{4}{l}{\textbf{SST-2 Sentiment Benchmark}} \\
& \textit{SST-2} & \textit{CR} & \textit{Yelp} \\
\cmidrule(lr){2-2} \cmidrule(lr){3-3} \cmidrule(lr){4-4}
Jensen--Shannon ($D_{\text{JS}}$) & \textbf{92.28 $\pm$ 0.24} & 88.21 $\pm$ 0.77 & \textbf{91.70 $\pm$ 0.49} \\
Symmetric KL ($D_{\text{SymKL}}$) & 92.01 $\pm$ 0.54 & 87.85 $\pm$ 1.63 & 91.73 $\pm$ 0.44 \\
$L_2$ Distance & 92.01 $\pm$ 0.37 & \textbf{88.30 $\pm$ 1.48} & 91.55 $\pm$ 0.55 \\
\midrule
\multicolumn{4}{l}{\textbf{Extractive QA Benchmark}} \\
& \textit{SQuAD-v2} & \textit{NewsQA} & \textit{NQ} \\
\cmidrule(lr){2-2} \cmidrule(lr){3-3} \cmidrule(lr){4-4}
Span F1 & 26.60 $\pm$ 0.70 & 25.21 $\pm$ 0.22 & 37.35 $\pm$ 0.96 \\
Jaccard Distance & 27.03 $\pm$ 1.13 & \textbf{25.36 $\pm$ 0.23} & 37.78 $\pm$ 0.29 \\
Levenshtein Distance & \textbf{27.10 $\pm$ 1.51} & 24.65 $\pm$ 0.23 & \textbf{38.19 $\pm$ 1.38} \\
\midrule
\multicolumn{4}{l}{\textbf{NER Benchmark}} \\
& \textit{CoNLL-03} & \textit{WNUT17} & \textit{OntoNotes} \\
\cmidrule(lr){2-2} \cmidrule(lr){3-3} \cmidrule(lr){4-4}
Entity Set F1 & \textbf{91.62 $\pm$ 0.22} & \textbf{46.04 $\pm$ 1.11} & \textbf{78.56 $\pm$ 0.30} \\
Entity Set Jaccard & 91.62 $\pm$ 0.33 & 45.80 $\pm$ 0.57 & 78.46 $\pm$ 0.37 \\
Entity Flip Rate & 91.62 $\pm$ 0.33 & 45.80 $\pm$ 0.57 & 78.46 $\pm$ 0.37 \\
\bottomrule
\end{tabular}
\caption{Sensitivity of IWD to distance metric formulations across tasks on DeBERTa-v3 ($N=5\text{k}$).}
\label{tab:distance_metrics_ablation}
\end{table*}

\begin{table}[t]
\centering
\setlength{\tabcolsep}{2.5pt}

{\small
\begin{tabular}{l cccc}
\toprule
\textbf{Variant Strategy} & \textbf{MNLI-M} & \textbf{MNLI-MM} & \textbf{HANS} & \textbf{SNLI} \\
\textbf{\& Metric} & \textbf{(ID)} & \textbf{(ID)} & \textbf{(OOD1)} & \textbf{(OOD2)} \\
\midrule
\multicolumn{5}{l}{\textbf{JS Distance (Primary Metric)}} \\
~~All Variants & \textbf{84.238} & \textbf{84.947} & \textbf{57.323} & \textbf{80.542} \\
~~Original Variant & 82.812 & 83.411 & 56.206 & 77.433 \\
~~Random Variant & 82.873 & 83.441 & 53.103 & 78.185 \\
\cmidrule(lr){1-5}
\multicolumn{5}{l}{\textbf{Sym KL Divergence}} \\
~~All Variants & 83.311 & 84.121 & 54.420 & 78.784 \\
~~Original Variant & 82.170 & 83.197 & 52.076 & 75.706 \\
~~Random Variant & 82.292 & 82.994 & 52.643 & 77.027 \\
\cmidrule(lr){1-5}
\multicolumn{5}{l}{\textbf{$L_2$ Distance}} \\
~~All Variants & \underline{84.014} & \underline{84.875} & 55.036 & \underline{79.617} \\
~~Original Variant & 82.934 & 83.868 & \underline{55.626} & 77.037 \\
~~Random Variant & 82.638 & 83.390 & 52.923 & 78.195 \\
\bottomrule
\end{tabular}
} 

\caption{Ablation study evaluating environment variant selection strategies during student distillation across different distance metrics using DeBERTa-v3 on ID and OOD benchmarks.}
\label{tab:ablation_distance_variants}
\end{table}

\section{C. Ablation Experiments and Results}
\label{sec:appendix_ablations}

\paragraph{Sensitivity to Weight Temperature ($\tau$).}
Table~\ref{tab:ablation_tau_sensitivity} evaluates the weighting temperature $\tau \in \{0.1, 0.3, 1.0, 3.0, 10.0, 30.0\}$, which dictates the sharpness of sample re-weighting relative to teacher divergence ($D_i$). Classification tasks (MNLI, SST-2) and NER are robust across $\tau \in [0.3, 3.0]$, with ID and mean OOD metrics varying by less than $0.7$ and $1.5$ points, respectively. In contrast, extractive QA (SQuAD v2) is highly sensitive due to high variance in teacher answer-span fluctuations, benefiting dramatically from aggressive downweighting ($\tau = 0.1$). Lowering $\tau$ from $30.0$ to $0.1$ yields substantial OOD gains of $+7.6$ F1 on NewsQA ($29.20$) and $+9.8$ F1 on Natural Questions ($45.32$). Thus, while our default setting of $\tau = 1.0$ is optimal for classification and sequence tagging, tasks with high divergence variance like QA benefit from smaller temperatures ($\tau \in \{0.1, 0.3\}$).

\paragraph{Sensitivity to Distance Metric Selection.}
The optimal teacher invariance distance metric depends directly on the structure of the output space. For classification tasks (MNLI, SST-2), bounded and symmetric Jensen--Shannon divergence ($D_{\text{JS}}$) achieves peak performance by preventing extreme target probability tails from overly skewing sample weights compared to Symmetric KL or $L_2$ distance. For structured prediction, metrics directly targeting output boundary shifts yield superior signals: Levenshtein distance performs best for Extractive QA ($38.19$ F1 on NQ), while Entity Set F1 achieves top accuracy for NER ($46.04$ F1 on WNUT17). Aligning the metric to the output modality—probability divergences for classification and structural metrics for span extraction—maximizes IWD's ability to isolate non-causal prediction variance.

\paragraph{Environment Variant Utilization.}
Table~\ref{tab:ablation_distance_variants} compares three student training presentation strategies using teacher divergence weights ($D_i$) computed over the full set of variants: (1) \textit{All Variants} (distilling on every generated environment state), (2) \textit{Original Variant} (distilling on clean inputs only), and (3) \textit{Random Variant} (selecting a single random variant per sample). Distilling across \textbf{All Variants} consistently achieves top performance across all benchmarks. Restricting student inputs to the \textit{Original} or \textit{Random} variant causes substantial drops (e.g., losing up to $4.2$ percentage points on HANS). Evaluating across multiple distance measures ($JS$, Symmetric $KL$, $L_2$) confirms this trend is metric-agnostic, with \textbf{JS Distance} providing the most effective weighting signal overall.

\paragraph{Weight Normalisation.}
We compare default batch-mean normalisation ($\tilde{w}_i = |\mathcal{B}| \cdot w_i / \sum_{j\in\mathcal{B}} w_j$) against unnormalised weighting across MNLI and SST-2 (Table~\ref{tab:ablation_normalization}). Performance is effectively equivalent: on MNLI, ID accuracy shifts by $<0.17$ points and mean OOD performance differs by $\le 0.07$ points across DeBERTa-v3 and Qwen-2.5. On SST-2, unnormalised weighting shows minor average OOD gains ($+0.63$ on DeBERTa-v3, $+0.41$ on Qwen-2.5) but fluctuates across individual datasets (e.g., $+1.32$ on FPB vs. $-0.22$ on SemEval-2017 for DeBERTa-v3), reflecting single-seed variance. We retain batch-mean normalisation because it: (i) decouples the effective learning rate from the absolute scale of $D_i$, stabilizing $\tau$ tuning across tasks; and (ii) strictly preserves the per-sample relative importance weights required by our theoretical bound.

\begin{table}[t]
\centering
\setlength{\tabcolsep}{3pt}
{\small
\begin{tabular}{ll ccc}
\toprule
\textbf{Backbone} & \textbf{Norm} & \textbf{ID} & \textbf{HANS (OOD1)} & \textbf{SNLI (OOD2)} \\
\midrule
\multicolumn{5}{l}{\textbf{Task: MNLI}} \\
DeBERTa-v3 & Batch-Mean & \textbf{83.25} & \textbf{51.98} & 79.66 \\
           & None       & 83.22 & 51.80 & \textbf{79.91} \\
           & $\Delta$   & -0.03 & -0.18 & +0.25 \\
\cmidrule(lr){1-5}
Qwen-2.5   & Batch-Mean & \textbf{81.87} & 53.19 & \textbf{78.39} \\
           & None       & 81.70 & \textbf{53.20} & 78.23 \\
           & $\Delta$   & -0.17 & +0.02 & -0.16 \\
\midrule
\end{tabular}

\vspace{2pt}

\begin{tabular}{ll ccccc}
\textbf{Backbone} & \textbf{Norm} & \textbf{ID} & \textbf{SemEval} & \textbf{CR} & \textbf{Yelp} & \textbf{FPB} \\
\midrule
\multicolumn{7}{l}{\textbf{Task: SST-2}} \\
DeBERTa-v3 & Batch-Mean & 91.40 & \textbf{89.27} & 85.64 & 91.38 & 90.19 \\
           & None       & \textbf{91.74} & 89.05 & \textbf{86.70} & \textbf{91.74} & \textbf{91.51} \\
           & $\Delta$   & +0.34 & -0.22 & +1.06 & +0.36 & +1.32 \\
\cmidrule(lr){1-7}
Qwen-2.5   & Batch-Mean & 92.89 & \textbf{74.19} & \textbf{73.40} & 78.45 & 83.58 \\
           & None       & \textbf{93.00} & 73.91 & 72.34 & \textbf{79.13} & \textbf{85.87} \\
           & $\Delta$   & +0.11 & -0.28 & -1.06 & +0.68 & +2.29 \\
\bottomrule
\end{tabular}
}
\caption{Ablation study comparing batch-mean normalization against no weight normalization on MNLI and SST-2 across DeBERTa-v3 and Qwen-2.5 backbones.}
\label{tab:ablation_normalization}
\end{table}

\begin{table}[t]
\centering
\setlength{\tabcolsep}{2.5pt}

{\small
\begin{tabular}{@{}l c c c c@{}}
\toprule
\textbf{Task} & $\boldsymbol{w_i}$ & \textbf{\% }$\boldsymbol{w_i}$ & \textbf{\% }$\boldsymbol{w_i}$ & \textbf{Effective} \\
              & \textbf{(Mean $\pm$ Std)} & $\mathbf{\ge 0.9}$ & $\mathbf{< 0.5}$ & $\boldsymbol{N}$ \\
\midrule
MNLI     & $0.87 \pm 0.10$ & 40.8\% & 0.0\%  & 4,931 (98.6\%) \\
SST-2    & $0.95 \pm 0.08$ & 78.0\% & 0.0\%  & 4,966 (99.3\%) \\
SQuAD v2 & $0.86 \pm 0.20$ & 63.5\% & 4.4\%  & 4,748 (95.0\%) \\
NER      & $0.48 \pm 0.06$ & 0.0\%  & 58.7\% & 4,931 (98.6\%) \\
\bottomrule
\end{tabular}
} 

\caption{Distribution of per-sample IWD weights across training data at $\tau=1.0$ and batch-mean normalization ($N=5{,}000$ per task). Effective sample size $(\sum_i w_i)^2 / \sum_i w_i^2$ measures information retention relative to uniform weighting, where values near $N$ indicate gentle re-weighting rather than aggressive filtering.}
\label{tab:weight_distribution}
\end{table}

\paragraph{Weight Distribution Across Tasks.}
Table~\ref{tab:weight_distribution} reports IWD weight statistics across $N=5{,}000$ samples at $\tau=1.0$. High effective sample sizes across all tasks ($95.0\%$--$99.3\%$) confirm that IWD performs smooth continuous re-weighting rather than aggressive sample filtering. Weight profiles naturally adapt to task complexity: classification tasks (SST-2, MNLI) retain high mean weights ($0.95$ and $0.87$), whereas SQuAD v2 exhibits higher variance ($\pm 0.20$) and $4.4\%$ low-weight samples ($w_i < 0.5$) due to teacher sensitivity on complex extractive spans. On NER, higher overall perturbation sensitivity yields a lower mean weight ($0.48 \pm 0.06$), yet a $98.6\%$ effective sample size confirms that relative importance ordering is preserved while maintaining overall data capacity.

\end{document}